\documentclass[twoside,11pt]{article}

%

%
%
%

\usepackage{jmlr2e}
\usepackage{amsmath}




\jmlrheading{}{2020}{}{}{}{}{Sisi Ma and Roshan Tourani}


\ShortHeadings{Shapley Value for Model Interpretation}{Ma and Tourani}
\firstpageno{1}

\begin{document}

\title{Predictive and Causal Implications of using Shapley Value for Model Interpretation}

\author{\name Sisi Ma \email sisima@umn.edu \\
       \addr Institute for Health Informatics\\Department of Medicine\\
University of Minnesota\\ Minneapolis, Minnesota, USA
       \AND
       \name Roshan Tourani \email roshan@umn.edu \\
       \addr Institute for Health Informatics\\
University of Minnesota\\ Minneapolis, Minnesota, USA}

\editor{}

\maketitle

\begin{abstract}
Shapley value is a concept from game theory. Recently, it has been used for explaining complex models
produced by machine learning techniques. Although the mathematical definition of Shapley value is straight-forward,
the implication of using it as a model interpretation tool is yet to be described. In the current
paper, we analyzed Shapley value in the Bayesian network framework. We established the relationship between Shapley value
and conditional independence, a key concept in both predictive and causal modeling.
Our results indicate that, eliminating a variable with high Shapley value from a model do not
necessarily impair predictive performance, whereas eliminating a variable with low Shapley value
from a model could impair performance. Therefore, using Shapley value for feature selection do not
result in the most parsimonious and predictively optimal model in the general case. More importantly, Shapley
value of a variable do not reflect their causal relationship with the target of interest.
\end{abstract}

\begin{keywords}
  Causal Bayesian Networks, Predictive Models, Shapley Value, Model Explanation,  Model Interpretability
\end{keywords}

\section{Introduction}

With the increased availability of data and the rapid advancement in predictive modeling, predictive
models are playing important roles in many domains~\citep{bellazzi2008predictive,heaton2017deep,xingjian2015convolutional,sharma2011predicting}.
Many predictive modeling methods, such as deep
learning, produce highly complex models containing thousands of predictor variables. Although these models
can be highly accurate, they are also very difficult to interpret. Various methods have been
developed for model interpretation~\citep{lipton2018mythos}. Shapley value based model interpretation methods have gained a
lot of popularity recently, since they have solid theoretical foundation and has been demonstrated to
produce interpretations that matches human intuitions~\citep{vstrumbelj2014explaining,lundberg2017unified}. 
Also, since Shapley value of a variable takes into account the variables' individual as well as
combined contribution for predicting a target of interest, it is also widely adopted as a metric
for feature importance (often considered a component of model interpretation as well.). Many recent studies reports Shapley value for variables in reported predictive models ~\citep{al2020machine,dworzynski2020nationwide,artzi2020prediction}.  And, many variants
of Shapley value feature importance based feature selection methods were proposed~\citep{zaeri2018feature,cohen2007feature,mikenina1999improved,sun2012using}.

Although the definition of Shapley value is relatively straight-forward, the implication of using it
as a model explanation tool is yet to be described. Possibly because the goal of model explanation
and interpretation can be hard to define~\citep{lipton2018mythos}. Here, we focus on the relationship between
Shapley value and a variable's importance for the predictive task. We also examine whether the
Shapley value is indicative of a variable's causal relationship with the target of interest (i.e. causal interpretation). 
We seek answers to the questions of the following nature: 
If a variable have a larger Shapley value compared to other variables, 
does it mean removing it would result in a more significant performance loss?
What are the consequences of using Shapley value as a heuristic for feature selection? 
Does a variable's Shapley value indicate causal relationship with respect to the target? 

The reminder of the paper is organized as the following. In section 2, we introduce relevant concepts
for Shapley value and predictive modeling. In section 3, we
consider predictive modeling as a coalitional game and examine the characteristics of Shapley value
of each variable. We assume that the predictive models are built on data generated from a faithful
causal Bayesian network. We attempt to establish correspondence between Shapley value of variables and the
structural properties of the Bayesian network. We illustrate the implications of using Shapley value
for attributing variable importance. We also demonstrate that, in general, there is no relationship
between Shapley value and causality. The key results and practical implications for using Shapley value for model interpretation is summarized in section 4.

\section{Notations and Definitions}

In this section, we provide a minimal set of definitions of essential concepts and analytical tools used in
subsequent sections of the paper. Other relevant definitions and theorems regarding predictive modeling, variable importance, and (causal) Bayesian networks are included in Appendix A through D.
Unless specifically mentioned, we use uppercase letters to denote a variable (e.g. $X$, $Y$) and its
corresponding vertex in the causal Bayesian network underlying the data generation processes of these
variables. We use bold uppercase letters to denote a set of variables or vertices (e.g.
$\mathbf{Z}$). We use lowercase letters to denote instantiations or values of the corresponding
uppercase variables (e.g. $x$ is a instantiation of $X$). 

For all the discussion below, we use the following common notations.
$\mathbf{V}=\{V_1,V_2,\dots,V_p\}$ denotes all measured variables. $T$ denotes the target of interest
with respect to a predictive model.  $p(\mathbf{V},T)$ denotes the joint distribution over of all
variables.

The Shapley value is a solution concept in game theory~\citep{shapley1953value}. 
The Shapley value defines the division of total payoff generated by all players to individual players according to their contribution.
\begin{definition}
	Coalitional game.
	A coalitional game is a tuple $\langle\mathbf{N},v\rangle$ where $\mathbf{N}=\{1,2,\dots,n\}$
	is a finite set of $n$ players, and $v: 2^N\rightarrow{\rm I\!R}$ is a characteristic
	function such that $v(\emptyset)=0$.
\end{definition}

$v$ is a function defined over subsets of $\mathbf{N}$ that describes the value of each subset of
$\mathbf{N}$. The goal is to come up with a solution to distribute the total amount of payoff from
all players, i.e. $v(\mathbf{N})$, to each player.  We use $\phi_i(v)$ to denote the distributed
payoff to player $i$, according to the value function $v$. The Shapley value is a distribution
solution $\phi$ that have unique properties.

\begin{definition}
	Shapley value.
	For a game $\langle \mathbf{N},v \rangle$, Shapley value for a player $i$ is defined as: 
	$$
	\phi_i(v) =
	\sum_{ \mathbf{S} \subseteq \mathbf{N}-\{i\} }
	\frac{(|\mathbf{N}| - |\mathbf{S}|-1)! |\mathbf{S}|!}{ |\mathbf{N}|! }
	\left[ v(\mathbf{S} \cup \{i\}) - v(\mathbf{S}) \right].
	$$
\end{definition}
Briefly, the Shapley value of a variable $i$ is the weighted sum of the contribution of $i$ in each
subset of $\mathbf{N}$. The contribution of $i$ with respect to a subset $\mathbf{S}$ is computed by
$v(\mathbf{S}\cup\{i\})-v(\mathbf{S})$.\\

The Shapley value is considered a uniquely fair way for distributing the total payoff $v(\mathbf{N})$ into
$(\phi_1(v), \phi_2(v), \dots, \phi_n(v))$ for the $n$ players, since it satisfies the following
characteristics (in fact, Shapley value is the only solution to distribute $v(\mathbf{N})$ that
satisfies all the characteristics below):
\begin{itemize}
	\item Efficiency:
		$\sum_{i \in \mathbf{N}} \phi_i (v) = v(\mathbf{N})$.
	\item Symmetry:
		If for two players $i$ and $j$,
		$\forall \mathbf{S} \subseteq \mathbf{N}-\{i,j\}$,
		$v(\mathbf{S} \cup \{i\}) = v(\mathbf{S} \cup \{j\})$, then
		$\phi_i(v) = \phi_j(v)$.
	\item Dummy:
		If $\forall \mathbf{S} \subseteq \mathbf{N}-\{i\}$,
		$v(\mathbf{S} \cup \{i\}) = v(\mathbf{S})$, then $\phi_i(v)=0$.
	\item Additivity:
		For any pair of games $\langle \mathbf{N}, v \rangle$,
		$\langle \mathbf{N}, w \rangle$: $\phi(v+w) = \phi(v) + \phi(w)$,
		where $\forall \mathbf{S}, (v+w)(\mathbf{S}) = v(\mathbf{S}) + w(\mathbf{S})$. 
\end{itemize}

Since predictive modeling could be viewed as a coalitional game, Shapley value have been considered
as a metric for variable importance and model explanation.

\begin{definition}
	Predictive Modeling as a coalition game.
	Let $f(\mathbf{V})$ be a predictive model for a target of interest $T$. $f(\cdot)$ could be
	view as a coalition game $\langle\mathbf{V}, m\rangle$, where each variable $V_i \in
	\mathbf{V}$ is a player in the coalition game, and $m: 2^{|N|}\rightarrow{\rm I\!R}$ is a
	function that maps a subset of variables $\mathbf{S}$ to a real number representing the
	contribution of $\mathbf{S}$ for predicting $T$.
\end{definition}

There are a variety of choices for $m$, depending on nature of the task. One popular task that
Shapley value has been applied to is model explanation of individual observations~\citep{vstrumbelj2014explaining,lundberg2017unified}. 
The goal of this task is to explain why a predictive model made a specific prediction for the outcome given the
observed predictor values from an observation. In this case, $m$ is commonly defined as the deviation
of the predictive outcome from a null model (which would make prediction based solely on the
distribution of the outcome). Another task that Shapley value has been applied to is the attribution
of variable importance~\citep{owen2017shapley,shorrocks1999decomposition}. 
The goal of this task is to assign a quantitative value to each variable to
indicate their importance for the predictive performance of the model over a collection of
observations. In this case, $m$ can be based on any metric for predictive performance, such as the
area under the receiver operating curve (AUC), accuracy, sensitivity, specificity, or the mean absolute deviation of the predictive outcome from a null model. The last metric is generated by the most popular Shapley value model explanation package SHAP~\citep{SHAP}, and therefore is commonly reported in application focused literature. 
Similar to mentioned above, to define $m$, 
the predictive performance metric of choice need to be subtracted by the predictive performance of a null model, to ensure the efficiency and dummy characteristics are met.

In this study, we focus our discussion on using Shapley value for evaluating variable importance and
its implications. Our results can be extended to model explanation of individual observations, which
would be the topic of a subsequent paper.

\section{Shapley Value, Network Structure, Causation, and Prediction}

In this Section, we discuss the relationship between Shapley value and variable importance in a predictive model. We also describe the relationship between a variable's Shapley value with its (causal) structural
property with respect to the target of interest characterized by the (causal) Bayesian network over
$\mathbf{V}\cup\{T\}$.

We restrict our discussion to faithful (causal) Bayesian networks. We denote
$\langle\mathbf{V}\cup T,G,p\rangle$ a (causal) Bayesian network faithful to distribution $p$ over
variable set $\mathbf{V} \cup T$. And $f(\mathbf{V})$ is a predictive model for $T$ as a coalition
game $\langle\mathbf{V}, m\rangle$, where $m$ is maximized for each $\mathbf{S} \subseteq \mathbf{V}$
only when $p(T|\mathbf{S})$ is estimated accurately. 

\subsection{Shapley Value Summands and Conditional Independence}\label{shapsummands}
We first introduce theorems relating the summands of Shapley value of a variable to
its structural properties in the (causal) Bayesian network. This is achieved by establishing the
connection between conditional independence relations in a faithful Bayesian network, which directly
corresponds to the structural properties of the network, with the summands of Shapley value of a
variable.

Briefly, a Shapley value summand for variable $i$ given a subset $\mathbf{S}$ with respect to
predicting $T$ is $m(\mathbf{S}\cup\{i\}) - m(\mathbf{S})$; it captures the additional contribution of
$i$ to $T$ given $\mathbf{S}$.
This bears conceptual similarity to the conditional independence between $i$ and $T$ given $\mathbf{S}$
(Definition~\ref{conditionalind}).
If we constrain function $m$ to be maximized for each $\mathbf{S} \subseteq \mathbf{V}$ only when
$p(T|\mathbf{S})$ is estimated accurately
(similar to the treatment for the performance metric in Theorem~\ref{mbopt}),
then, we have: (1) the conditional independence relationship $p(T|i,\mathbf{S})=p(T|\mathbf{S})$ corresponds to
$m(\mathbf{S}\cup\{i\})-m(\mathbf{S})=0$;
And, (2) the conditional dependence relationship $p(T|i,\mathbf{S})\neq p(T|\mathbf{S})$ corresponds to
$m(\mathbf{S}\cup\{i\})-m(\mathbf{S})>0$.

\begin{theorem}\label{shapsummandPC}
	$\forall \mathbf{S} \subseteq \mathbf{V}-\{X\}$, $m(\mathbf{S}\cup X)-m(\mathbf{S})>0$ , iff
	$X\in PC(T)$. (i.e.  All Shapley value summands of parents and children of $T$ are none-zero.)
\end{theorem}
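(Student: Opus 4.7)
The plan is to reduce the claim to a purely graphical statement about d-separation of $X$ from $T$ in the underlying DAG $G$, using the correspondence between summands of $m$ and conditional (in)dependence spelled out in the paragraph preceding the theorem. Under the stipulated behavior of $m$, $m(\mathbf{S}\cup\{X\})-m(\mathbf{S})>0$ is equivalent to $p(T\mid X,\mathbf{S})\neq p(T\mid\mathbf{S})$, i.e.\ to $X\not\perp T\mid\mathbf{S}$, while equality of the two predictive distributions corresponds to a summand of zero. By faithfulness of $\langle\mathbf{V}\cup T,G,p\rangle$, conditional independence coincides with d-separation, so the theorem reduces to: $X\in PC(T)$ if and only if, for every $\mathbf{S}\subseteq\mathbf{V}\setminus\{X\}$, $X$ and $T$ are d-connected given $\mathbf{S}$.

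For the ``if'' direction I would observe that if $X\in PC(T)$ then there is a direct edge between $X$ and $T$ in $G$. This single edge is itself an active path: it has no non-endpoint vertex that could be used to block it, and it contains no collider, so no $\mathbf{S}\subseteq\mathbf{V}\setminus\{X,T\}$ can separate its endpoints. Hence $X$ and $T$ are d-connected given every $\mathbf{S}$, and by faithfulness $X\not\perp T\mid\mathbf{S}$ for all $\mathbf{S}$, making every summand strictly positive.

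For the converse I would argue by contrapositive. Assume $X\notin PC(T)$, so $X$ and $T$ are non-adjacent in $G$, and exhibit one $\mathbf{S}$ for which the summand vanishes. By acyclicity at most one of $X\in\mathrm{de}(T)$ and $T\in\mathrm{de}(X)$ can hold. In the first case I would take $\mathbf{S}=\mathrm{Pa}(X)$; in the second, or if neither holds, I would take $\mathbf{S}=\mathrm{Pa}(T)$. Either way $X\notin\mathbf{S}$ (trivially for $\mathrm{Pa}(X)$, and by non-adjacency for $\mathrm{Pa}(T)$), and the local directed Markov property gives $X\perp T\mid\mathbf{S}$, so the correspondence forces the corresponding summand to be zero. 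The hard part of the whole argument is matching the quantifier structure in the two directions — the ``if'' direction demands a universal statement (single-edge paths cannot be blocked by \emph{any} set), whereas the converse needs only a single witness — and checking that the local Markov argument really produces a conditioning set that lies in $\mathbf{V}\setminus\{X\}$. Faithfulness does the rest of the work by letting me pass freely between d-separation and probabilistic (in)dependence.
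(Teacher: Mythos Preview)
Your proposal is correct and follows essentially the same route as the paper: both directions go through the equivalence between $m(\mathbf{S}\cup\{X\})-m(\mathbf{S})>0$ and $X\not\perp T\mid\mathbf{S}$, then reduce to the graphical fact that an edge between $X$ and $T$ exists iff no $\mathbf{S}$ d-separates them. The only cosmetic difference is that the paper cites this edge--independence characterization as a single lemma (Theorem~\ref{edgeindependence}) and notes the steps are reversible, whereas you unpack both directions by hand (single-edge paths cannot be blocked; local Markov on $\mathrm{Pa}(X)$ or $\mathrm{Pa}(T)$ furnishes a separating witness when there is no edge).
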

\begin{proof}
We prove $\forall \mathbf{S} \subseteq \mathbf{V}-\{X\}$,
$m(\mathbf{S}\cup X)-m(\mathbf{S})>0 \Leftarrow X\in PC(T)$.
And note that each step of the proof is reversible, so we omit the proof for the sufficiency.

Since $X \in PC(T)$, there is an edge between $X$ and $T$.
From Theorem~\ref{edgeindependence}, we know that,
$X \not\perp T|\mathbf{S}$, $\forall \mathbf{S}\subseteq\mathbf{V}-\{X\}$.
Therefore, according to 
Definition~\ref{conditionalind},
$\forall \mathbf{S}$,  $p(T | X, \mathbf{S}) \neq p(T|\mathbf{S})$ where $p(\mathbf{S})>0$,
i.e. $X$ gives more information regarding $T$ in addition to any $\mathbf{S}$.
And since we constrain $m$ to be functions that are
maximized for each $\mathbf{S}\subseteq \mathbf{V}$ only when $p(T|\mathbf{S})$ is estimated accurately,
$\forall \mathbf{S} \subseteq \mathbf{V}-\{X\}$, $m(\mathbf{S}\cup \{X\})-m(\mathbf{S})>0$. 
\end{proof}

%

\begin{corollary}\label{shapsummandnotmb}
	Not all Markov boundary members (strongly relevant variables) of $T$ have all non-zero Shapley value summands.
\end{corollary}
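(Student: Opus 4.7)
The plan is to prove the corollary by exhibiting a counterexample: a Markov boundary member of $T$ that possesses at least one zero Shapley value summand. By Theorem~\ref{shapsummandPC}, every member of $PC(T)$ already has all non-zero summands, so the only place to look for a counterexample is among the remaining Markov boundary members, namely the spouses of $T$ (other parents of $T$'s children).

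First, I would recall the structural characterization of the Markov boundary in a faithful Bayesian network: $MB(T)$ consists of the parents of $T$, the children of $T$, and the spouses of $T$. Combined with Theorem~\ref{shapsummandPC}, this focuses the task on exhibiting a spouse with a vanishing summand. Next, I would construct a minimal faithful network in which $X$ and $T$ share exactly one common child $C$, with no edge between $X$ and $T$ and no other paths. Here $X$ is clearly in $MB(T)$ (it is a spouse via $C$), but the only connecting path $X \to C \leftarrow T$ is a v-structure and is therefore blocked by the empty set. By faithfulness, this d-separation entails $X \perp T$, i.e. $p(T \mid X) = p(T)$.

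Then I would evaluate the Shapley summand of $X$ at $\mathbf{S} = \emptyset$. Since $p(T \mid X) = p(T) = p(T \mid \emptyset)$, the assumption that $m$ is maximized at each subset only when $p(T \mid \mathbf{S})$ is estimated accurately (the same assumption invoked in Theorem~\ref{shapsummandPC}) forces $m(\{X\}) = m(\emptyset)$, so the summand $m(\{X\}) - m(\emptyset)$ is zero. This demonstrates that $X$, although a bona fide Markov boundary member, does not have all non-zero Shapley value summands, establishing the corollary.

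The proof is essentially a structural observation rather than a computation, so I do not anticipate a real obstacle. The only subtlety worth flagging is ensuring the chosen example is genuinely faithful (so that the marginal independence $X \perp T$ really does hold and $X$ really is in $MB(T)$ via $C$), and making explicit that the direction of argument used here is the reverse of Theorem~\ref{shapsummandPC}: one conditioning set with a conditional independence suffices to kill a single summand, even though conditional dependence on some other $\mathbf{S}$ (e.g. $\mathbf{S} = \{C\}$) still guarantees $X$'s overall relevance.
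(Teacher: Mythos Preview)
Your proposal is correct and follows the same approach as the paper: exhibit a spouse of $T$ (a parent of a child of $T$ that is not itself in $PC(T)$) as a Markov boundary member with at least one vanishing summand. The paper's proof is terser, simply invoking the ``only if'' direction of Theorem~\ref{shapsummandPC} to conclude that some summand must be zero, whereas you go further and explicitly identify the vanishing summand ($\mathbf{S}=\emptyset$) via d-separation of the v-structure; the two arguments are essentially the same.
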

\begin{proof}
	Let $X$ be a parent of the children of $T$, and $X \not\in PC(T)$.
	$X$ is a Markov boundary member of $T$ but it has at least one Shapley value summands being zero
	due to Theorem~\ref{shapsummandPC}.
\end{proof}

\begin{theorem}\label{shapsammandirr}
	$\forall X$ that is not connected to $T$ by any path,
	all summands of its Shapley value $m(\mathbf{S}\cup \{X\}))-m(\mathbf{S})$ are zero,
	$\forall \mathbf{S} \subseteq \mathbf{V}-\{X\}$. 
\end{theorem}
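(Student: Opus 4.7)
The plan is to mirror the structure of the proof of Theorem~\ref{shapsummandPC}, but in the opposite direction: show that no-path-between-$X$-and-$T$ forces $X \perp T \mid \mathbf{S}$ for every conditioning set $\mathbf{S}$, and then translate that into equality of Shapley summands via the constraint on $m$. Concretely, I would proceed in three steps.

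First, I would argue that if $X$ is not connected to $T$ by any path in the DAG $G$, then $X$ and $T$ lie in different (undirected) connected components of $G$. Consequently, there is no d-connecting path from $X$ to $T$ to be activated or blocked, so for every $\mathbf{S} \subseteq \mathbf{V} - \{X\}$ (including $\mathbf{S} = \emptyset$ and $\mathbf{S}$ containing arbitrary ancestors, descendants, or colliders), $X$ is d-separated from $T$ given $\mathbf{S}$, vacuously. By the faithfulness assumption on $\langle \mathbf{V} \cup T, G, p \rangle$, d-separation implies conditional independence, so $X \perp T \mid \mathbf{S}$ for every such $\mathbf{S}$.

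Second, I would invoke Definition~\ref{conditionalind} to turn this independence into the distributional equality $p(T \mid X, \mathbf{S}) = p(T \mid \mathbf{S})$ wherever $p(\mathbf{S}) > 0$. In other words, adding $X$ to any coalition $\mathbf{S}$ provides no additional information about $T$ beyond what $\mathbf{S}$ already provides.

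Third, I would use the standing assumption on $m$: that it is maximized on each $\mathbf{S}$ exactly when $p(T \mid \mathbf{S})$ is estimated accurately. Since $p(T \mid \mathbf{S} \cup \{X\}) = p(T \mid \mathbf{S})$, the optimal estimator on $\mathbf{S} \cup \{X\}$ achieves the same value as the optimal estimator on $\mathbf{S}$, so $m(\mathbf{S} \cup \{X\}) = m(\mathbf{S})$, which gives a zero Shapley summand for every $\mathbf{S}$.

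The only subtle step is the first one: being careful that ``no path'' in an undirected sense (as used in the statement) really does rule out every possible d-connecting path, including those that would be opened by conditioning on colliders or their descendants. The cleanest justification is that any d-connecting path is in particular an undirected path in $G$, so absence of undirected paths between $X$ and $T$ trivially forbids d-connection regardless of $\mathbf{S}$; this is the main point to state carefully, after which the remainder is a direct specialization of the logic used in Theorem~\ref{shapsummandPC}.
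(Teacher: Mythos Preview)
Your proposal is correct and follows essentially the same approach as the paper: establish that no path between $X$ and $T$ forces $X \perp T \mid \mathbf{S}$ for every $\mathbf{S}$, and then invoke the constraint on $m$ to conclude each summand vanishes. The only cosmetic difference is that the paper routes the first step through Theorem~\ref{irrnopath} (no path $\Leftrightarrow$ irrelevant, hence independent of $T$ given every $\mathbf{S}$), whereas you argue it directly from d-separation and faithfulness; both are valid and amount to the same thing.
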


\begin{proof}
	Given Theorem~\ref{irrnopath}, $X$ is an irrelevant variable.
	Given Theorem~\ref{edgeindependence}, $\forall \mathbf{S}\subseteq\mathbf{V}-\{X\}$,
	$X \perp T|\mathbf{S}$, i.e. $\forall\mathbf{S}$, $p(T|X,\mathbf{S})=p(T|\mathbf{S})$
	where $p(\mathbf{S})>0$.
	Therefore,
	$\forall \mathbf{S}\subset\mathbf{V}-\{X\}, m(\mathbf{S}\cup -\{X\})-m(\mathbf{S})=0$.
\end{proof}
Theorem~\ref{shapsammandirr} is related to the dummy characteristics of the Shapley value.

The theorems in this section illustrate that given faithfulness and specific $m$ functions, the
Shapley value summands contain rich information about the structural properties of
a variable with respect to a target in a (causal) Bayesian network. This connection between Shapley
value summands and the structural property of the Bayesian network can be exploited in two ways. 
Firstly, examining the distribution of the Shapley summands values can give
insight into the structural property of a variable. For example, even one summand of Shapley value
being zero indicates that the variable is not directly connected to (or is not a direct cause or
direct effect in a causal Bayesian network) of the target. Secondly, computing Shapley value exactly is often
computationally intensive, since all $2^{|\mathbf{V}|}$ summands associated with $2^{|\mathbf{V}|}$
models need to be computed. Therefore, the Shapley value is often computed by sampling from all
possible subsets $\mathbf{S}$ \citep{vstrumbelj2014explaining}. Given the structure of a
faithful Bayesian network, one might be able to sample $\mathbf{S}$ more efficiently
and reduce the computation needed to achieve a
good approximation of the Shapley value. 


\subsection{Shapley Value of Variables in a Faithful Bayesian Network}
When using the Shapley value for model explanation or feature importance, the summands of a variable
are not examined individually. But rather, the resulted sum, i.e. Shapley value, for individual
variables are compared. The theorem below describe the relationship between the Shapley
value of a variable and the variable's structural property in a faithful (causal) Bayesian network. Specifically, when variable $V_j$ d-separates $V_i$ from $T$, $V_j$'s Shapley value is larger than that of $V_i$.

\begin{theorem}\label{cindshap}
	 If $V_i \perp T | V_j$ and $V_j \not\perp T | V_i$  in a faithful (causal) Bayesian network $\langle \mathbf{V}\cup T, G, p\rangle$,	 
	 then the Shapley value of $V_j$ is larger than the Shapley value of $V_i$, i.e. $phi_{j}>phi_{i}$. 
\end{theorem}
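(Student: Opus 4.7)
The plan is to re-organize the difference $\phi_j - \phi_i$ as a single sum indexed by subsets $\mathbf{R} \subseteq \mathbf{V} - \{V_i, V_j\}$, so that the problem reduces to a pairwise comparison between $m(\mathbf{R} \cup \{V_j\})$ and $m(\mathbf{R} \cup \{V_i\})$. For each Shapley value, I would split the inner sum over $\mathbf{S}$ according to whether $\mathbf{S}$ contains the other of the two variables, writing $\mathbf{S} = \mathbf{R}$ or $\mathbf{S} = \mathbf{R} \cup \{\text{other variable}\}$. After subtracting, the contributions involving $m(\mathbf{R})$ and $m(\mathbf{R} \cup \{V_i, V_j\})$ cancel in the way they should, giving
\begin{align*}
\phi_j - \phi_i = \sum_{\mathbf{R} \subseteq \mathbf{V}-\{V_i,V_j\}} \bigl[ w(|\mathbf{R}|) + w(|\mathbf{R}|+1) \bigr] \bigl[ m(\mathbf{R} \cup \{V_j\}) - m(\mathbf{R} \cup \{V_i\}) \bigr],
\end{align*}
where $w(k) = (|\mathbf{V}|-k-1)!\,k!/|\mathbf{V}|!$ is strictly positive. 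So it suffices to show that every bracketed difference is non-negative and at least one is strictly positive.

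For the strict positivity, I would handle the cleanest case $\mathbf{R} = \emptyset$ directly. The hypothesis $V_i \perp T \mid V_j$ gives $p(T \mid V_i, V_j) = p(T \mid V_j)$, so under the assumption that $m$ is maximized precisely when $p(T \mid \mathbf{S})$ is estimated accurately, $m(\{V_j\}) = m(\{V_i, V_j\})$. On the other hand, $V_j \not\perp T \mid V_i$ gives $p(T \mid V_i, V_j) \neq p(T \mid V_i)$, so $m(\{V_i\}) < m(\{V_i, V_j\})$. Chaining these yields $m(\{V_j\}) > m(\{V_i\})$, which delivers strict positivity for the $\mathbf{R} = \emptyset$ summand.

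The main obstacle lies in establishing $m(\mathbf{R} \cup \{V_j\}) \geq m(\mathbf{R} \cup \{V_i\})$ for \emph{every} remaining $\mathbf{R}$. The global relation $V_i \perp T \mid V_j$ does not automatically extend to $V_i \perp T \mid V_j, \mathbf{R}$, because conditioning on variables in $\mathbf{R}$ can open collider paths between $V_i$ and $T$. My plan is to appeal to the structural content of faithfulness: since $V_j$ d-separates $V_i$ from $T$ marginally, every $V_i$--$T$ path in the DAG passes through $V_j$, so in a data-processing sense the predictive content of $V_i$ about $T$ is mediated by what $V_j$ already carries. I would attempt to turn this into the statement that swapping $V_i$ for $V_j$ inside $\mathbf{R} \cup \{V_i\}$ never decreases $m$, possibly by establishing $I(T; V_j \mid \mathbf{R}) \geq I(T; V_i \mid \mathbf{R})$ via a path-based argument and then translating back through the characterization of $m$ as a Bayes-optimal score. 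Making this rigorous is the delicate step, and I expect it may require narrowing $m$ to scores that are monotone in conditional mutual information, or strengthening the hypothesis to a conditional-on-$\mathbf{R}$ version of d-separation.
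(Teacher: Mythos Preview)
Your decomposition is exactly the one the paper uses: split each Shapley sum by whether the other variable is present, re-index over $\mathbf{R}\subseteq\mathbf{V}-\{V_i,V_j\}$, cancel the $m(\mathbf{R})$ and $m(\mathbf{R}\cup\{V_i,V_j\})$ terms, and reduce to the sign of $m(\mathbf{R}\cup\{V_j\})-m(\mathbf{R}\cup\{V_i\})$ with strictly positive weights. So at the structural level you have reproduced the paper's proof.

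The step you flag as delicate---lifting $V_i\perp T\mid V_j$ to $V_i\perp T\mid V_j,\mathbf{R}$ for \emph{every} $\mathbf{R}$---is not something the paper justifies either. The paper simply writes ``Since $V_i\perp T\mid V_j$, i.e.\ $p(T\mid\mathbf{S},V_i,V_j)=p(T\mid\mathbf{S},V_j)$'' and likewise for the other hypothesis, treating the pairwise conditional (in)dependence as if it held after adjoining an arbitrary $\mathbf{S}$. It then chains $m(\mathbf{S}\cup\{V_i,V_j\})=m(\mathbf{S}\cup\{V_j\})$ and $m(\mathbf{S}\cup\{V_i,V_j\})>m(\mathbf{S}\cup\{V_i\})$ to get the termwise sign. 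Your $\mathbf{R}=\emptyset$ argument is exactly this chain specialized to the one case where it is clearly licensed by the stated hypotheses.

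Your worry is well placed: faithfulness gives that $\{V_j\}$ d-separates $V_i$ from $T$, but that does not force every $V_i$--$T$ path to pass through $V_j$; paths blocked only by an unconditioned collider can be activated once that collider (or a descendant) lands in $\mathbf{R}$. A simple configuration like $V_i\to C\leftarrow T$ together with $V_j\to T$ already satisfies both hypotheses yet has $V_i\not\perp T\mid V_j,C$, so the paper's ``for all $\mathbf{S}$'' assertion is not a consequence of the stated assumptions. In short, you have matched the paper's argument and correctly identified the point at which it asserts more than the hypotheses deliver; the paper offers no additional lemma to close that gap.
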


\begin{proof}
	First we separate the summation in Shapley value of $V_i$ to two components,
	$\phi_{i|\hat{j}}$,
	the sum over sets $\mathbf{S}$ where $V_j \notin \mathbf{S}$,
	and
	$\phi_{i|j}$,
	 the sum over sets $\mathbf{S}$ where $V_j \in \mathbf{S}$,
	$$
	\phi_i=
	\sum_{\mathbf{S}\subseteq\mathbf{V}-\{V_i\}}
	\frac{(|\mathbf{N}|-|\mathbf{S}|-1)!|\mathbf{S}|!}{|\mathbf{N}|!}
	\left[ m(\mathbf{S}\cup\{V_i\}) - m(\mathbf{S}) \right]
	=
	\phi_{i|\hat{j}} + \phi_{i|j},
	$$
	$$
	\phi_{i|\hat{j}}
	=
	\sum_{\mathbf{S}\subseteq\mathbf{V}-\{V_i\}, V_j \notin \mathbf{S}}
	\frac{(|\mathbf{N}|-|\mathbf{S}|-1)!|\mathbf{S}|!}{|\mathbf{N}|!}
	\left[ m(\mathbf{S}\cup\{V_i\}) - m(\mathbf{S}) \right]
	$$	
	$$
	\phi_{i|j}
	=
	\sum_{\mathbf{S}\subseteq\mathbf{V}-\{V_i\}, V_j \in \mathbf{S}}
	\frac{(|\mathbf{N}|-|\mathbf{S}|-1)!|\mathbf{S}|!}{|\mathbf{N}|!}
	\left[ m(\mathbf{S}\cup\{V_i\}) - m(\mathbf{S}) \right]
	$$
	Similarly, for variable $V_j$, we write
	$\phi_j = \phi_{j|\hat{i}} + \phi_{j|i}$ where $\phi_{j|\hat{i}}$ only sums over
	$\mathbf{S}$ where $V_i \notin \mathbf{S}$ and $\phi_{j|i}$ only sums over
	$\mathbf{S}$ where $V_i \in \mathbf{S}$.

	Now we rewrite all the sums with $\mathbf{S} \subseteq V-\{V_i, V_j\}$,
	\begin{align*}
		\phi_i =
		\sum_{\mathbf{S} \subseteq \mathbf{V}-\{V_i, V_j\}} &
		\left\{
			\frac{(|\mathbf{N}|-|\mathbf{S}|-1)!|\mathbf{S}|!}{|\mathbf{N}|!}
			\left[ m(\mathbf{S}\cup\{V_i\}) - m(\mathbf{S}) \right]
			+
			\right. \cr & \left.
			+
			\frac{(|\mathbf{N}|-|\mathbf{S}|-2)!(|\mathbf{S}| + 1)!}{|\mathbf{N}|!}
			\left[ m(\mathbf{S} \cup \{V_i, V_j\}) - m(\mathbf{S} \cup \{ V_j \}) \right]
		\right\},
	\end{align*}
	\begin{align*}
		\phi_j =
		\sum_{\mathbf{S} \subseteq \mathbf{V}-\{V_i, V_j\}} &
		\left\{
			\frac{(|\mathbf{N}|-|\mathbf{S}|-1)!|\mathbf{S}|!}{|\mathbf{N}|!}
			\left[ m(\mathbf{S}\cup\{V_j\}) - m(\mathbf{S}) \right]
			+
			\right. \cr & \left.
			+
			\frac{(|\mathbf{N}|-|\mathbf{S}|-2)!(|\mathbf{S}| + 1)!}{|\mathbf{N}|!}
			\left[ m(\mathbf{S} \cup \{V_i, V_j\}) - m(\mathbf{S} \cup \{ V_i \}) \right]
		\right\}.
	\end{align*}
	Now writing $\phi_i - \phi_j$ we can easily cancel out the $m(\mathbf{S})$ and
	$m(\mathbf{S} \cup \{V_i, V_j \})$ terms,
	and write $\phi_i - \phi_j$ as,

\begin{align*}
		\phi_i - \phi_j
		=
		\sum_{\mathbf{S} \subseteq \mathbf{V}-\{V_i, V_j\}}
		&
		\left(
		\frac{(|\mathbf{N}|-|\mathbf{S}|-1)!|\mathbf{S}|!}{|\mathbf{N}|!}
		+
		\frac{(|\mathbf{N}|-|\mathbf{S}|-2)!(|\mathbf{S}| + 1)!}{|\mathbf{N}|!}
		\right) \cr
		&\times
		\left[ m(\mathbf{S}\cup\{V_i\}) - m(\mathbf{S} \cup \{V_j\}) \right],
\end{align*}
Since $V_i \perp T | V_j$, i.e. $p(T|\mathbf{S},V_i,V_j)=p(T|\mathbf{S},V_j)$, we have
$m(\mathbf{S}\cup\{V_i,V_j\})- m(\mathbf{S}\cup\{V_j\})=0$. And $V_j \not\perp T | V_i$, i.e.
$p(T|\mathbf{S},V_i,V_j)\neq p(T|\mathbf{S},V_i)$, we have: $m(\mathbf{S}\cup\{V_i,V_j\})-
m(\mathbf{S}\cup\{V_i\})>0$.
Therefore, $m(\mathbf{S}\cup\{V_i\})-m(\mathbf{S}\cup\{V_j\})<0$, and thus, $\phi_j > \phi_i$.
\end{proof}

Theorem~\ref{cindshap} state that, in a faithful (causal) Bayesian network, if $V_j$ renders $V_i$
conditional independent of $T$, i.e. $V_j$ contains all information in $V_i$ regarding $T$, the
Shapley value of $V_j$ is larger than that of $V_i$. It is worth noting that, the converse is true
only under special conditions such as when the graph is a chain ($V_1\rightarrow V_2\rightarrow ...
\rightarrow V_i \rightarrow... \rightarrow V_j \rightarrow ...\rightarrow T$). In general, the magnitude
of the Shapley value of two variables do not entail their conditional independence relationship.
It is easy to consider an example where $V_i$ and $V_j$ are both parents of the target $T$, one of
them could have a larger Shapley value, but none of them is conditionally independent of $T$ given
the other. The fact that one can not infer conditional independency from Shapley value in the general case
is related to our discussion in Section~\ref{shapsummands}. Specifically, the conditional independence relationship corresponds to the summands of Shapley value. 
Summing the summands together results in information loss with respect to conditional independency.

\subsection{Shapley Value of Variables: Implications for Prediction and Causation}
In this section, we introduce two theorems that have important implications for using Shapley
value for feature importance and model explanation. We focus on the Shapley values of the Markov boundary members of $T$, since these variables are of critical importance for both prediction and causation. Predictively, the Markov boundary of $T$ is the most parsimonious set of variables that contains all the information regarding $T$ and thus the optimal predictor set of $T$ (Theorem~\ref{mbopt}). Causally, the faithful causal Bayesian networks, the Markov Boundary of $T$ constitutes of the direct causes, direct effects, and direct cause of direct effects of $T$ (Theorem~\ref{MBconstitution}).

\begin{theorem}\label{MBMsmall}
	Markov boundary members of $T$ can have smaller Shapley value compared to non-Markov boundary members.
\end{theorem}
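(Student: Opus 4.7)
The plan is to prove this existence statement by exhibiting an explicit faithful causal Bayesian network in which some Markov boundary member of $T$ has a strictly smaller Shapley value than some non-Markov boundary member. The intuition I would exploit is that $MB(T)$ contains spouses of $T$ (by Theorem~\ref{MBconstitution}), and a spouse is conditionally independent of $T$ given any subset that does not contain the linking child. So, by the reasoning of Section~\ref{shapsummands}, most of a spouse's Shapley summands vanish. Meanwhile an ancestor of $T$ that lies outside $MB(T)$ can still be dependent on $T$ across many subsets and hence can accumulate several strictly positive summands.

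Concretely, I would take $\mathbf{V} = \{V_1, V_2, V_3, C\}$ with edges $V_1 \to V_2 \to T$ and $T \to C \leftarrow V_3$. By d-separation, $MB(T) = \{V_2, C, V_3\}$ and $V_1 \notin MB(T)$. For the spouse $V_3$, the only path to $T$ is blocked as a collider at $C$, giving $V_3 \perp T \mid \mathbf{S}$ whenever $C \notin \mathbf{S}$; the corresponding summands of $\phi_{V_3}$ are zero. For $V_1$, Theorem~\ref{edgeindependence} applied along the chain $V_1 \to V_2 \to T$ gives $V_1 \not\perp T \mid \mathbf{S}$ for every $\mathbf{S}$ with $V_2 \notin \mathbf{S}$, so the corresponding summands of $\phi_{V_1}$ are strictly positive.

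To push the comparison through, I would parameterize the coupling $V_3 \to C$ by a small $\varepsilon > 0$ while keeping the chain $V_1 \to V_2 \to T$ near-deterministic and independent of $\varepsilon$. As $\varepsilon \to 0$ the at-most four non-zero summands of $\phi_{V_3}$ (those with $C \in \mathbf{S}$) shrink to zero by continuity of $m$ in the induced distribution $p(T \mid \mathbf{S})$, whereas $\phi_{V_1}$ retains a strictly positive limit determined by the chain strengths. Hence for all sufficiently small $\varepsilon$ we have $\phi_{V_3} < \phi_{V_1}$, with $V_3 \in MB(T)$ and $V_1 \notin MB(T)$, which is exactly the claim.

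The main obstacle I foresee is not the qualitative picture but verifying that faithfulness is preserved at the chosen $\varepsilon$, since a vanishing edge strength can violate faithfulness in the limit. I would handle this by fixing a concrete binary CPT parameterization for all four edges, checking faithfulness symbolically on an open right neighborhood of $\varepsilon = 0$ (it holds generically), and then using continuity in $\varepsilon$ of each Shapley summand to conclude that the inequality $\phi_{V_1} > \phi_{V_3}$ survives throughout that neighborhood. If the symbolic check proves cumbersome, I would fall back to an even smaller ternary-state example and enumerate all $2^{3}$ summands of each Shapley value by direct computation.
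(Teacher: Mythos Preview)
Your proposal is correct in outline and would yield a valid proof, but it takes a genuinely different route from the paper. The paper constructs an explicit linear-Gaussian network $A\!\to\!T$, $B\!\to\!T$, $C\!\to\!T$, $A\!\to\!S$, $B\!\to\!S$, $C\!\to\!S$, takes $m$ to be ordinary $R^2$, and computes all Shapley values in closed form to obtain $\phi_A=\phi_B=\phi_C<\phi_S$ with $S\notin MB(T)$. In that example every Markov boundary member is a \emph{parent} of $T$, so the phenomenon does not rely on the ``weak spouse'' mechanism you exploit; rather, $S$ acts as a surrogate that aggregates information from several causes and thereby outranks each individual cause. Your construction instead pits a spouse $V_3\in MB(T)$ (whose summands vanish whenever the linking child is absent) against a grandparent $V_1\notin MB(T)$, and uses a continuity/limiting argument in the edge strength $\varepsilon$ rather than an explicit numeric evaluation. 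Both strategies establish the existence claim; the paper's buys concreteness and the stronger message that even direct causes can be outranked, while yours buys a structural explanation tied directly to the zero-summand analysis of Section~\ref{shapsummands}, at the cost of working near a degenerate (unfaithful-in-the-limit) parameterization. One small correction: the dependence $V_1\not\perp T\mid\mathbf{S}$ for $V_2\notin\mathbf{S}$ follows from d-separation and faithfulness (Theorem~\ref{dsep}), not from Theorem~\ref{edgeindependence}, which concerns adjacent vertices only; the conclusion you draw is nonetheless correct.
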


\begin{proof}
We proof the theorem by an example:
Let $G$ be a faithful Bayesian network containing 5 vertices, where $A\rightarrow T$, $B\rightarrow
T$, $C\rightarrow T$, $A\rightarrow S$, $B\rightarrow S$, $C\rightarrow S$. The data generating
process is the following:
$$A \sim \mathcal{N}(0, 2^{2})$$
$$B \sim \mathcal{N}(0, 2^{2})$$
$$C \sim \mathcal{N}(0, 2^{2})$$
$$T = A+B+C+\mathcal{N}(0, 2^{2})$$
$$S = A+B+C+\mathcal{N}(0, 2^{2})$$
We use a linear regression model and the ordinary $R^2$ as the $m(\cdot)$ function. We have: 
$$m(\emptyset)=0$$ 
$$m(\{A\})=m(\{B\})=m(\{C\})=\frac{1}{4}$$ 
$$m(\{S\})=\frac{9}{16}$$
$$m(\{A,B\})=m(\{B,C\})=m(\{A,C\})=\frac{1}{2}$$
$$m(\{A,S\})=m(\{B,S\})=m(\{C,S\})=\frac{7}{12}$$
$$m(\{A,B,S\})=m(\{A,C,S\})=m(\{B,C,S\})=\frac{5}{8}$$
$$m(\{A,B,C\})=m(\{A,B,C,S\})=\frac{3}{4}$$\\
Applying Shapley value formular, we have:
$$\phi_A=\phi_B=\phi_C=95/576=0.1649\dots$$
$$\phi_S=49/192=0.2552\dots$$
The Markov Boundary members $A$, $B$, and $C$ all have smaller Shapley value compared to non-Markov
boundary member $S$.
\end{proof}

\begin{theorem}\label{mbsumsmall}
	The sum of the Shapley values of all Markov boundary members of $T$ can be smaller than the Shapley value of a non-Markov boundary member.
\end{theorem}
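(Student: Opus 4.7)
The plan is to mimic the strategy used in the proof of Theorem~\ref{MBMsmall}: exhibit a specific faithful linear-Gaussian Bayesian network together with $R^2$ as the performance function $m$, compute all $m(\mathbf{S})$ in closed form from the Gaussian covariance structure, and then evaluate the Shapley values directly to verify the strict inequality $\sum_{i\in MB(T)} \phi_i < \phi_S$ for a specific non-Markov-boundary variable $S$.

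The first step is to scale up the construction underlying Theorem~\ref{MBMsmall}. I would let $T$ have $n$ exchangeable parents $A_1,\dots,A_n$ and a single common child $S = A_1+\dots+A_n+\mathcal{N}(0,\sigma_S^2)$, so that $T\perp S\mid A_1,\dots,A_n$ and hence $S\notin MB(T)$. Increasing $n$ drives each individual $\phi_{A_i}$ down (the parents become more interchangeable from the Shapley point of view), while $\phi_S$ remains substantial because $S$ on its own already captures almost all of $A_1+\dots+A_n$, which is what predicts $T$.

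The second step is to obtain closed-form expressions. By exchangeability of the parents, $m(\mathbf{S})$ depends only on how many $A_i$'s are in $\mathbf{S}$ and on the indicator $\mathbf{1}[S\in\mathbf{S}]$, so the parent Shapley sums collapse into a single sum indexed by subset size. The random-permutation identity $\sum_{\mathbf{S}\subseteq T} w_{|\mathbf{S}|}^{|V|} = 1/(|V|-|T|)$ then lets both $\phi_{A_1}$ and $\phi_S$ be written as finite rational functions of $n$ and the variance parameters, from which $\sum_{i\in MB}\phi_i = n\,\phi_{A_1}$ and the desired inequality reduces to a single rational comparison, completely analogous to the explicit verification at the end of the proof of Theorem~\ref{MBMsmall}.

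The main obstacle is strictness. In the cleanest fully symmetric limit (a single non-Markov-boundary $S$ with $\sigma_S \to 0$), the efficiency and symmetry axioms conspire to force the tight identity $\sum_{i\in MB}\phi_i = \phi_S$, so equality holds but not strict inequality. To push past this boundary I would deliberately break the exchange symmetry between ``the full collection of parents'' and ``the single summary $S$'' by adjoining additional non-Markov-boundary variables (for instance, further common children of subsets of $\{A_1,\dots,A_n\}$, or noisy copies of individual parents) that dilute each parent's marginal contribution more than they dilute the contribution of $S$, because $S$ is the unique variable simultaneously summarising all $n$ parents. Once the right augmented configuration and noise parameters are fixed, the rational Shapley expressions derived in step two can be evaluated to land strictly on the desired side of the equality, completing the proof.
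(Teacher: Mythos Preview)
Your diagnosis of the obstacle is accurate, and in fact slightly worse than you state. In the linear-Gaussian setup $T=\sum_i A_i+\varepsilon$, $S=\sum_i A_i+\eta$ with $R^2$ as $m$, a direct random-permutation computation gives $\phi_S=m(\mathbf{V})/2$ exactly when $\sigma_S=0$, hence $\sum_{i\in MB}\phi_{A_i}=m(\mathbf{V})/2=\phi_S$ by efficiency; and for every faithful choice $\sigma_S>0$ one gets $\sum_{i\in MB}\phi_{A_i}>\phi_S$ strictly, because adding noise to $S$ lowers every summand of $\phi_S$ while raising the summands of $\phi_{A_i}$ on conditioning sets containing $S$. (The $\sigma_S=0$ limit is itself unfaithful: $T\perp A_i\mid S$ despite the edge $A_i\to T$.) So the baseline construction sits on the wrong side of the target inequality throughout the faithful parameter range.

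Your proposed remedy---adjoin further non-MB children to ``dilute each parent's marginal contribution more than they dilute the contribution of $S$''---is not carried out, and it is not clear it can succeed in the linear-Gaussian $+$ $R^2$ framework. Any auxiliary variable that substitutes for a parent $A_i$ (a noisy copy, say) also substitutes for $S$ on conditioning sets containing the remaining parents, so it drains $\phi_S$ as well; you have not exhibited, even heuristically, a configuration and parameter choice that tips the balance strictly. As written, the proposal is a plan with an open step, not a proof.

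The paper's argument avoids this difficulty by abandoning the linear-Gaussian setting altogether. It builds a small \emph{discrete} network with an XOR-type interaction: binary parents $A,B$ of $T$ with $P(T{=}1\mid A,B)$ chosen so that $A$ alone and $B$ alone are nearly uninformative about $T$, while the pair $(A,B)$ is highly informative. A four-valued common \emph{cause} $C$ of $A$ and $B$ (a grandparent of $T$, hence outside $MB(T)$) encodes the joint configuration $(A,B)$ up to small noise, so $C$ alone is almost as good as the full boundary. Taking $m$ to be classification accuracy, the explicit computation yields $\phi_A+\phi_B\approx 0.181<\phi_C\approx 0.219$. The mechanism is exactly the non-additive interaction that linear models with $R^2$ cannot express: the Markov boundary members are individually near-worthless but jointly valuable, and a single non-boundary variable that captures their \emph{joint} state collects more than half the total payoff.
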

\begin{proof}
We prove the theorem by an example:
Let $G$ be a faithful Bayesian network containing 3 vertices, where  $C\rightarrow A$, $C\rightarrow B$, $A\rightarrow T$, $B\rightarrow T$. The data generating
process is the following:\\
$$P(C=1)=P(C=2)=P(C=3)=P(C=4)$$
$$P(A=1|C=1)=0.05; P(B=1|C=1)=0.05$$
$$P(A=1|C=2)=0.05; P(B=1|C=2)=0.95$$
$$P(A=1|C=3)=0.95; P(B=1|C=3)=0.05$$
$$P(A=1|C=4)=0.95; P(B=1|C=4)=0.95$$
$$P(T=1|A=0,B=0)=0.9$$
$$P(T=1|A=0,B=1)=0.05$$
$$P(T=1|A=1,B=0)=0.15$$
$$P(T=1|A=1,B=1)=0.9$$

We compute the Shapley value for the best possible model (the model that results in irreducible error), such classifier will predict $T=1$ if $P(T=1|\mathbf{S}=\mathbf{s})>0.5$, and predict $T=0$ otherwise. We use the $m$ function: $\sum_\mathbf{s} P(\mathbf{S}=\mathbf{s})\max (P(T=1|\mathbf{S}=\mathbf{s}),P(T=0|\mathbf{S}=\mathbf{s})).$\\
We have:\\
$$m(\emptyset)=0.5; m(\{A\})=m(\{B\})=0.0525; m(\{C\})=0.8235$$
$$m(\{A,B\})=0.9; m(\{A,C\})=m(\{B,C\})=0.8597;$$
$$m(\{A,B,C\})=0.9;$$
and the Shapley values are:
$$\phi_A=\phi_B=0.0903\dots; \phi_C=0.2194\dots$$
We have $(\phi_A+\phi_B)<\phi_C$, i.e. the sum of the Shapley values of all Markov boundary members
of $T$ can be smaller than that of a non-Markov boundary member.
\end{proof}

	Theroem~\ref{MBMsmall} and ~\ref{mbsumsmall} have several implications for using Shapley values in
	predictive modeling. 
	With respect to feature selection, a non-zero Shapley value does not indicate a variable is 	
	non-redundant. 
	The Shapley value of a non-Markov boundary member $C$ in the proof for Theorem~\ref{mbsumsmall}, is redundant for predicting $T$ with the presence of $A$ and $B$, but its Shapley value is non-zero and larger than the Markov boundary members. Similarly, a smaller Shapley value does not indicate a variable is redundant. Several
		recent publications uses Shapley value for feature selection	 
		\citep{zaeri2018feature,cohen2007feature,mikenina1999improved,sun2012using}. 
		Typical methods either select a set of variables of a fixed size with the highest Shapley value,
		or employ recursive feature elimination using Shapley value as the method for ranking
		variables. Only selecting the top ranked several variables could potentially miss a
		Markov boundary member and results in suboptimal performance. Whereas, the recursive feature
		elimination by Shapley value could result in optimal feature set with redundant
		features, since any feature that have higher Shapley value than the Markov boundary
		member with the lowest Shapley value would be kept.
	With respect to model interpretation, the magnitude of Shapley value of variables do not necessarily 	
		correspond to causality. In the example from Theorem~\ref{MBMsmall}, 
		$S$ is neither a cause nor an effect of $T$,
		but it has larger Shapley value compared to all other causes of $T$.
		Moreover, the magnitude of Shapley value of variables also do not necessarily
		correspond to local causality: In the example from Theorem~\ref{mbsumsmall}, $C$ is an
		indirect cause $T$, but it has larger Shapley value compared to the sum of all direct
		causes of $T$\footnote{In general, the relationships learned by supervised learning predictive learning 
		algorithms are	not  guaranteed  to  reflect  causal  relationships. Therefore, there is no reason
		to believe the explanation of these models via Shapley values or other methods should reflect causality.}.

\section{Discussion and Conclusion}
In this study, we established the theoretical background for examining the Shapley value and relating
it to faithful (causal) Bayesian network. This approach revealed several implications for using
Shaplely value for variable importance and model explanation. 
We summarize the key points from this study:
\begin{itemize}
\item The summands of Shapley value correspond to conditional independency under specific definition of $m$.
\item Using Shapley value for feature selection do not guarantee obtaining the minimal optimal feature set. 
\item Magnitude of Shapley value of variables do not necessarily correspond to causality.
\item Variables that are in the local causal neighborhood of $T$ do not necessarily have larger Shapley value compared to other variables.
\end{itemize} 

Despite the theoretical importance of Theorem ~\ref{MBMsmall} and Theorem ~\ref{mbsumsmall}, they are possibility results proven by examples. We intend to explore the following questions in the future: What are the general conditions under which the disassociation between Shapley value and causality emerges? 
How prevalent is the disassociation between Shapley value and causality in datasets collected from different domains? How does the performance of Shapley value based feature selection methods compare to other types of feature selection methods?



\newpage

\appendix

\section*{Appendix A: Predictive Models and Variable Importance}

First, we introduce a set of concepts that are related to predictive modeling. We focus our
discussion on the importance of variables in the predictive model, since one of the utility of
Shapley value is to access variable importance. 
\begin{definition}
	Optimal variable set~\citep{tsamardinos2003towards}.
	Given a data set $D$ sampled from $p(\mathbf{V},T)$, a
	learning algorithm $\mathbb{L}$, and a performance metric $\mathbb{M}$, variable set
	$\mathbf{V}_{\rm opt}$ is an optimal variable set of $T$ if applying $\mathbb{L}$ on
	$\mathbf{V}_{\rm opt}$ maximizes the performance metric $\mathbb{M}$ for predicting $T$.
\end{definition}
To put it plainly, the optimal variable set $\mathbf{V}_{\rm opt}$ for predicting $T$ is a subset of
variables that produces a model that maximizes the predictive performance $T$. This definition is
intuitive but not very useful, since it neither provides the mathematical characteristics of
$\mathbf{V}_{\rm opt}$ nor a way to identify $\mathbf{V}_{\rm opt}$.
Therefore, next we introduce the Markov Blanket theory and related concepts, 
since it describes the statistical characteristics of the optimal variable set and
provides the theoretical foundation for deriving the optimal variable set.\\

We first define conditional independence, a key concept underlying variable importance.
\begin{definition}\label{conditionalind}
	Conditional Independence~\citep{pearl2009causality}.
	Let $\mathbf{V}$ be a set of variables and $p(\mathbf{V})$ be the
	joint distribution over $\mathbf{V}$,
	$\forall \mathbf{X}, \mathbf{Y}, \mathbf{Z} \subset \mathbf{V}$,
	$\mathbf{X}$ and $\mathbf{Y}$
	are conditionally independent given $\mathbf{Z}$, if
	$p(\mathbf{X} | \mathbf{Y},\mathbf{Z}) = p(\mathbf{X} | \mathbf{Z})$, where $p(\mathbf{Z})>0$.
\end{definition}

In other words, variables in $\mathbf{Y}$ do not provide additional information regarding
$\mathbf{X}$ when $\mathbf{Z}$ is available. And therefore including variables in $\mathbf{Y}$ in a
model for predicting variables in $\mathbf{X}$ is redundant, when $\mathbf{Z}$ is part of the model.
We use the symbol $\perp$ to represent independence and symbol $|$ to represent conditioning. The
statement ``$\mathbf{X}$ and $\mathbf{Y}$ are conditionally independent given $\mathbf{Z}$'' is
expressed as $\mathbf{X} \perp \mathbf{Y} | \mathbf{Z}$.

To define a variable's relevance with respect to predicting a target variable $T$, Kohavi and
John~\citep{kohavi1997wrappers} first introduced the strong, weak, and irrelevant variables.
\begin{definition}
	\label{stronglyrelevance}
	Strong relevance.
	Let $\mathbf{S}_i=\mathbf{V}-V_i$, the set of all features except $V_i$. A feature $V_i$ is
	strongly relevant to $T$ iff there exists some $v_i$, $t$, and $\mathbf{s}_i$ for which
	$p(V_i=v_i,\mathbf{S}_i=\mathbf{s}_i) > 0$, such that
	$p(T=t | V_i=v_i, \mathbf{S}_i=\mathbf{s}_i) \neq p(T=t | \mathbf{S}_i=\mathbf{s}_i)$.  
\end{definition}

\begin{definition}\label{weakrelevance}
	Weak relevance.
	A feature $V_i$ is weakly relevant to $T$ iff it is not strongly relevant, and
	$\exists \mathbf{S}_i' \subset \mathbf{S}_i$ and some $v_i$, $t$, and $\mathbf{s}_i'$ for
	which $p(V_i=v_i,\mathbf{S}_i'=\mathbf{s}_i') > 0$, such that
	$p(T=t | V_i=v_i, \mathbf{S}_i'=\mathbf{s}_i') \neq p(T=t | \mathbf{S}_i=\mathbf{s}_i')$.  
\end{definition}

\begin{definition}\label{irrelevance}
	Irrelevance.
	A feature is relevant if it is either weakly relevant or strongly relevant; otherwise, it is
	irrelevant.
\end{definition}

The strongly relevant variables contain distinct information about $T$ given all other variables,
omitting them from the model will result in suboptimal models for $T$. Weakly relevant variables do
not contain additional information about $T$ given all other variables. But they contain additional
information about $T$ given a subset of other variables. They are redundant when all the strongly
relevant variables are part of the model for $T$.The irrelevant variables do not contain any information about $T$.

The Kohavi and John definitions for variable relevance focus on the distinct information content
regarding $T$ in individual variable with respect to other variables, whereas Markov boundary and
Markov blanket considers the joint information in a set of variables with respect to the target of
interest.
\begin{definition}
	Markov blanket~\citep{pearl2009causality,aliferis2003hiton}.
	A Markov blanket $\mathbf{M}$ of the response variable $T$ in the joint probability
	distribution $p(\mathbf{V},T)$ is a set of variables conditioned on which all other variables
	are independent of $T$, that is,
	$\forall Y \in \mathbf{V}-\mathbf{M}, T \perp Y | \mathbf{M}$.
\end{definition}

\begin{definition}
	Markov boundary~\citep{pearl2009causality,aliferis2003hiton}.
	If no proper subset of Markov blanket $\mathbf{M}$ of $T$ satisfies the definition of Markov
	blanket of $T$, then $\mathbf{M}$ is a Markov boundary (MB) of $T$.
\end{definition}

The correspondence between Markov blanket and optimal feature set of $T$ was established in the
theorem below. 

\begin{theorem}\label{mbopt}
	If $\mathbb{M}$ is a performance metric that is maximized only when $p(T|\mathbf{V})$
	is estimated accurately, and $\mathbb{L}$ is a learning algorithm that can approximate any
	conditional probability distribution, then variable set $\mathbf{M}$ is a Markov blanket of
	$T$ if and only if it is an optimal feature set of $T$. Variable set $\mathbf{M}$ is a Markov
	boundary of $T$ if and only if it is a minimal optimal feature set of $T$~\citep{tsamardinos2003towards,statnikov2013algorithms}.
\end{theorem}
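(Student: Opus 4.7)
The plan is to prove the theorem in two parts, corresponding to the two biconditionals (Markov blanket vs.\ optimal feature set, and Markov boundary vs.\ minimal optimal feature set). The overall strategy rests on the single key identity $p(T \mid \mathbf{V}) = p(T \mid \mathbf{M})$ whenever $\mathbf{M}$ is a Markov blanket of $T$. Once this identity is in hand, the learnability assumption on $\mathbb{L}$ and the characterization of $\mathbb{M}$ do almost all the remaining work.

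For the forward direction of the first biconditional, I would assume $\mathbf{M}$ is a Markov blanket, so $\forall Y \in \mathbf{V} - \mathbf{M}$, $T \perp Y \mid \mathbf{M}$. I would then iteratively chain these pairwise independences (or, equivalently, use the block form of conditional independence) to obtain $T \perp (\mathbf{V}-\mathbf{M}) \mid \mathbf{M}$, which by Definition~\ref{conditionalind} yields $p(T \mid \mathbf{V}) = p(T \mid \mathbf{M})$. Since $\mathbb{L}$ can approximate any conditional distribution, applying $\mathbb{L}$ to $\mathbf{M}$ recovers $p(T \mid \mathbf{M}) = p(T \mid \mathbf{V})$, so the hypothesis on $\mathbb{M}$ is satisfied and the performance is maximal; hence $\mathbf{M}$ is an optimal feature set. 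For the reverse direction I would argue by contrapositive: if $\mathbf{M}$ fails to be a Markov blanket then some $Y \in \mathbf{V} - \mathbf{M}$ satisfies $T \not\perp Y \mid \mathbf{M}$, whence $p(T \mid \mathbf{M}, Y) \neq p(T \mid \mathbf{M})$ and therefore $p(T \mid \mathbf{V}) \neq p(T \mid \mathbf{M})$. Any model built on $\mathbf{M}$ can at best approximate $p(T \mid \mathbf{M})$, so by the ``only when'' clause on $\mathbb{M}$ the performance is strictly suboptimal, and $\mathbf{M}$ cannot be an optimal feature set.

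For the second biconditional, I would invoke the first. By definition, the Markov boundary is a Markov blanket no proper subset of which is a Markov blanket. Under the equivalence just established, this is exactly the condition that $\mathbf{M}$ is an optimal feature set no proper subset of which is optimal --- i.e., a minimal optimal feature set. Both directions are then immediate by substitution.

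The main obstacle, and the one step that deserves care, is promoting the pairwise conditions ``$\forall Y \in \mathbf{V}-\mathbf{M},\ T \perp Y \mid \mathbf{M}$'' to the joint statement ``$T \perp (\mathbf{V}-\mathbf{M}) \mid \mathbf{M}$'' needed to conclude $p(T \mid \mathbf{V}) = p(T \mid \mathbf{M})$. Pairwise independences do not in general imply joint independence, so I would handle this either by appealing to the standard block form of the Markov blanket definition (several authors state it directly for sets rather than singletons), or by combining the graphoid semi-graphoid properties --- specifically contraction and weak union --- under the faithfulness assumption adopted in Section~3. Everything else in the argument is a direct application of definitions and the hypotheses on $\mathbb{L}$ and $\mathbb{M}$.
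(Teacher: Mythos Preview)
The paper does not actually prove this theorem: it is stated in Appendix~A as a cited background result (Tsamardinos, 2003; Statnikov et al., 2013) and is invoked elsewhere without argument. There is therefore no proof in the paper to compare your proposal against.

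That said, your outline is the standard argument from those references and is essentially correct. The one technical point you flag is real and worth being precise about: the paper's Markov blanket definition is stated pairwise ($\forall Y \in \mathbf{V}-\mathbf{M},\ T \perp Y \mid \mathbf{M}$), and the semi-graphoid axioms you name (contraction, weak union) do \emph{not} by themselves promote this to the joint statement $T \perp (\mathbf{V}-\mathbf{M}) \mid \mathbf{M}$; nor does faithfulness help here, since faithfulness governs which independences hold, not how they compose. The clean fix is the first one you mention---adopt the block form of the Markov blanket definition (as in Pearl), after which $p(T\mid\mathbf{V})=p(T\mid\mathbf{M})$ is immediate. Your contrapositive for the reverse direction is fine (it only needs decomposition, which always holds), and the second biconditional then follows by substitution exactly as you say.
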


\section*{Appendix B: Faithful Bayesian Networks}
We next introduce a set of concepts related to Bayesian network, since we will be examining Shapley
value for predictive models built from data generated from faithful Bayesian networks. 
\begin{definition}
	Bayesian network~\citep{neapolitan2004learning}.
	Let $\mathbf{V}$ be a set of variables and $p$ be a joint probability distribution over
	$\mathbf{V}$.  Let $G$ be a directed acyclic graph (DAG) such that all vertices of $G$
	correspond one-to-one to members of $\mathbf{V}$. $\forall X \in \mathbf{V}$, $X$ is
	conditionally independent of all non-descendants of $X$, given the parents of $X$
	(i.e. Markov condition holds).
	The triplet $\langle \mathbf{V},G,p \rangle$ defines a Bayesian network
\end{definition}

\begin{definition}
	Faithfulness~\citep{spirtes2000causation}.
	If all and only the conditional independence relations that are true in the joint
	distribution $p$ are entailed by the Markov condition applied to a DAG $G$, then $p$ and $G$
	are faithful to one another.
\end{definition}

Given faithfulness, it is possible to establish the relationship between the structural property of
the Bayesian network and the statistical properties of its joint distribution. The structural property
of a Bayesian network can be described with the help of d-separation and d-connection:

\begin{definition}\label{dsep.def}
	d-separation and d-connection~\citep{spirtes2000causation}.
	A collider on a path $p$ is a vertex with two incoming edges that belong to $p$.  A path
	between $X$  and $Y$  given a conditioning set $\mathbf{Z}$ is open, if (i) every collider of
	$p$ is in $\mathbf{Z}$ or has a descendant in $\mathbf{Z}$, and (ii) no other nodes on $p$
	are in $\mathbf{Z}$. If a path is not open, then it is blocked.
	Two variables $X$  and $Y$  are d-separated given a conditioning set $\mathbf{Z}$ in Bayesian
	network iff every path between $X$ , $Y$  is blocked. if $\not \exists\mathbf{Z} \subset \mathbf{V}$ that d-separates $X$ and $Y$,
	$X$ and $Y$ are d-connected.
\end{definition}

In a faithful  Bayesian Network, d-connection and d-separation (structural properties of the
network) have a one-to-one correspondence to all conditional dependence and independence relations
(statistical properties of the joint distribution), as stated in the Theorem~\ref{dsep}:

\begin{theorem}\label{dsep}
	Two variables $X$ and $Y$ are d-separated given a conditioning set $\mathbf{Z}$ in a faithful
	Bayesian network iff $X\perp Y | \mathbf{Z}$. It follows, that if
	they are d-connected, they are conditionally dependent given $\mathbf{Z}$~\citep{spirtes2000causation}.
\end{theorem}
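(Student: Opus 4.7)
The plan is to prove Theorem~\ref{dsep} by establishing the two directions separately: (i) \emph{soundness}, that d-separation of $X$ and $Y$ given $\mathbf{Z}$ in the DAG $G$ implies $X \perp Y \mid \mathbf{Z}$ in the joint distribution $p$, and (ii) \emph{completeness}, that $X \perp Y \mid \mathbf{Z}$ in $p$ implies d-separation of $X$ and $Y$ given $\mathbf{Z}$ in $G$. The completeness direction is essentially immediate from the definition of faithfulness given in the appendix, so the substantive work is concentrated on soundness. The final clause about d-connection and conditional dependence is then the contrapositive of completeness.

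For the soundness direction, I would reduce the general case to a structural argument using the \emph{moralized ancestral subgraph}. The key steps are: first, restrict attention to the ancestral set $\mathbf{A} = \text{An}(\{X,Y\} \cup \mathbf{Z})$, noting that the Markov condition yields a valid Bayesian network on $\mathbf{A}$ with the same d-separation relations among $X$, $Y$, $\mathbf{Z}$; second, moralize this ancestral subgraph by marrying parents of common children and dropping edge directions; third, invoke the standard equivalence that d-separation in the original DAG coincides with graph separation in this moralized ancestral graph. Finally, graph separation in an undirected graph whose factorization respects the Markov condition implies conditional independence via a direct manipulation of the factorized joint $p(\mathbf{V}) = \prod_i p(V_i \mid \text{Pa}(V_i))$: one sums out variables outside $\{X,Y\} \cup \mathbf{Z}$, groups the resulting factors according to the connected components of the moralized ancestral graph with $\mathbf{Z}$ removed, and observes that $X$ and $Y$ end up in distinct components, yielding $p(X,Y \mid \mathbf{Z}) = p(X \mid \mathbf{Z}) p(Y \mid \mathbf{Z})$. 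Faithfulness is not needed for this direction; only the Markov condition is used.

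For the completeness direction, suppose $X \perp Y \mid \mathbf{Z}$ in $p$ but $X$ and $Y$ are d-connected given $\mathbf{Z}$ in $G$. By the soundness direction, the Markov condition applied to $G$ entails exactly the collection of independences corresponding to d-separations. Since $X$ and $Y$ are d-connected given $\mathbf{Z}$, the independence $X \perp Y \mid \mathbf{Z}$ is not entailed by the Markov condition applied to $G$. But faithfulness asserts that every conditional independence holding in $p$ is entailed by the Markov condition applied to $G$, contradicting the assumption. Hence $X$ and $Y$ must be d-separated. The final clause follows by taking the contrapositive: if $X$ and $Y$ are d-connected given $\mathbf{Z}$, then they are not d-separated, so by the already-proved biconditional $X \not\perp Y \mid \mathbf{Z}$.

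I expect the main obstacle to be the soundness argument, and specifically the careful handling of colliders in the reduction to the moralized ancestral graph --- one must verify that a collider contributes to an active path exactly when it or one of its descendants lies in $\mathbf{Z}$, which is precisely what motivates restricting to the ancestral set before moralizing. The algebraic manipulation of the factorization after summing out non-ancestors of $\{X,Y\}\cup\mathbf{Z}$ is routine once the right graph is in hand, but bookkeeping the factors so that each remaining factor depends only on variables within a single component of the moralized graph minus $\mathbf{Z}$ is the delicate step. Completeness, by contrast, is a one-line consequence of the definition of faithfulness, so no additional technical machinery is required there.
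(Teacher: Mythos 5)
The paper offers no proof of this statement at all: Theorem~\ref{dsep} is quoted as a known result and attributed to \citet{spirtes2000causation}, so your proposal can only be compared against the standard literature argument. Your outline of the soundness direction (restrict to the ancestral set of $\{X,Y\}\cup\mathbf{Z}$, moralize, pass to graph separation, and then factor the joint across the components of the moralized graph with $\mathbf{Z}$ deleted) is the standard Lauritzen-style proof and is correct in structure, including the observation that only the Markov condition, not faithfulness, is used there.

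There is, however, a genuine gap in your completeness direction. You write that ``by the soundness direction, the Markov condition applied to $G$ entails \emph{exactly} the collection of independences corresponding to d-separations.'' Soundness gives only one inclusion: every d-separation yields an entailed independence. It does not give the reverse inclusion you need, namely that a conditional independence \emph{not} corresponding to a d-separation is \emph{not} entailed by the Markov condition. That reverse inclusion is the completeness of d-separation as an inference criterion (due to Geiger, Verma, and Pearl): for every d-connected triple $(X,Y,\mathbf{Z})$ there exists a distribution satisfying the Markov condition on $G$ in which $X\not\perp Y\mid\mathbf{Z}$. Since the paper defines faithfulness in terms of independences ``entailed by the Markov condition'' rather than directly in terms of d-separation, you cannot convert faithfulness into the d-separation biconditional without invoking this separate, nontrivial theorem. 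The fix is easy --- cite or prove d-separation completeness --- but as written the completeness direction does not follow from soundness plus the definition of faithfulness alone. The final clause (d-connection implies conditional dependence) is then the contrapositive, as you say, once this is repaired.
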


Below we list other two theorems that can be derived from Theorem~\ref{dsep}. They are particularly
useful when it comes to the local neighbourhood of a target variable of interest $T$.

\begin{theorem}\label{edgeindependence}
	In a faithful BN $\langle\mathbf{V},G,p\rangle$, there is an edge between the pair of nodes
	$X, Y \in \mathbf{V}$,
	iff $X \not\perp Y | \mathbf{S}$,
	$\forall \mathbf{S} \subseteq \mathbf{V} - \{X,Y\}$~\citep{spirtes2000causation}.
\end{theorem}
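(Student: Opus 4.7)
The plan is to prove both directions via d-separation, leveraging Theorem~\ref{dsep} to translate between structural and statistical statements.

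For the forward implication, I would argue that if there is an edge between $X$ and $Y$, then that edge is itself a path of length one between $X$ and $Y$. By Definition~\ref{dsep.def}, a path can only be blocked via its intermediate vertices (non-colliders in $\mathbf{S}$, or colliders whose descendants are not in $\mathbf{S}$), but the single-edge path has no intermediate vertex. Hence, for every $\mathbf{S} \subseteq \mathbf{V} - \{X,Y\}$, this path remains open, so $X$ and $Y$ are d-connected given $\mathbf{S}$. Theorem~\ref{dsep} then yields $X \not\perp Y \mid \mathbf{S}$.

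For the reverse implication I would argue the contrapositive: if there is no edge between $X$ and $Y$, I must exhibit some $\mathbf{S}^* \subseteq \mathbf{V} - \{X,Y\}$ with $X \perp Y \mid \mathbf{S}^*$. Because $G$ is a DAG, the descendant relation is a strict partial order, so at least one of $X, Y$ is a non-descendant of the other; without loss of generality assume $Y$ is a non-descendant of $X$. The Markov condition embedded in the definition of a Bayesian network states that $X$ is conditionally independent of all its non-descendants given $Pa(X)$; in particular, $X \perp Y \mid Pa(X)$, and since no edge connects $X$ to $Y$ we have $Y \notin Pa(X)$, so $Pa(X) \subseteq \mathbf{V} - \{X,Y\}$. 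Thus $\mathbf{S}^* = Pa(X)$ works, which falsifies the universal statement on the right-hand side of the biconditional.

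The main obstacle is the reverse direction, specifically the choice of a witness d-separating set. The argument hinges on the asymmetric fact that in any DAG at least one of two distinct vertices fails to be a descendant of the other, which allows the Markov condition to be invoked directly rather than having to unpack d-separation along every path in the skeleton. An alternative route would be the ancestral subgraph construction (moralize the subgraph over $An(\{X,Y\} \cup Pa(X))$ and read off separation), but the non-descendant plus Markov condition approach is strictly shorter given the definitions already stated in the paper.
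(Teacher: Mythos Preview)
The paper does not supply its own proof of Theorem~\ref{edgeindependence}; it merely states the result and cites \citet{spirtes2000causation}. So there is nothing in the paper to compare your argument against directly.

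That said, your proposal is correct and is essentially the standard textbook argument. The forward direction is exactly right: a single edge is a path with no interior vertex, so no conditioning set can block it, and faithfulness (Theorem~\ref{dsep}) converts d-connection into dependence. For the reverse direction your use of the local Markov condition is clean: acyclicity guarantees that at least one of $X,Y$ is a non-descendant of the other, and the absence of an edge ensures that vertex is also not a parent, so $Pa(X)$ (or $Pa(Y)$, as the case may be) is a valid subset of $\mathbf{V}-\{X,Y\}$ witnessing the independence. Note that this direction uses only the Markov condition, not faithfulness, which is worth remarking on since it shows the two halves of the biconditional rely on different assumptions.
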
 

\begin{theorem}\label{vstructure}
	In a faithful BN $\langle\mathbf{V},G,p\rangle$, let $PC(V_i)$ denote the set containing all
	parents of children of $V_i$. If for a triple of nodes $X, T, Y$ in $G$, $X \in PC(Y)$,
	$Y \in PC(T)$, and $X \not\in PC(T)$, then $X \rightarrow Y \leftarrow T$, iff
	$X \not\perp T| \mathbf{S}\cup{Y}$,
	$\forall\mathbf{S} \subseteq \mathbf{V}-\{X,T\}$~\citep{spirtes2000causation}.
\end{theorem}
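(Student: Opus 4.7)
The plan is to prove the biconditional by separately handling the two directions, leveraging the equivalence between d-separation and conditional independence in faithful Bayesian networks (Theorem~\ref{dsep}) together with the edge-existence characterization of Theorem~\ref{edgeindependence}.

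For the forward direction ($\Rightarrow$), I would argue directly. Assume $X \to Y \leftarrow T$ and fix any $\mathbf{S} \subseteq \mathbf{V} - \{X, T\}$. Inspect the length-two path $X \to Y \leftarrow T$: its only interior vertex is $Y$, which is a collider sitting in $\mathbf{S}\cup\{Y\}$, and there are no non-collider interior vertices to worry about. By Definition~\ref{dsep.def} the path is open given $\mathbf{S}\cup\{Y\}$, so $X$ and $T$ are d-connected, and faithfulness (Theorem~\ref{dsep}) yields $X \not\perp T \mid \mathbf{S}\cup\{Y\}$, as required.

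For the converse ($\Leftarrow$), I would argue by contrapositive. If the triple is not $X \to Y \leftarrow T$, then, since $X \in PC(Y)$ and $Y \in PC(T)$ force an edge on each of $X$--$Y$ and $Y$--$T$, the remaining orientations $X \to Y \to T$, $X \leftarrow Y \leftarrow T$, and $X \leftarrow Y \to T$ all make $Y$ a non-collider on the path $X$--$Y$--$T$. My goal is to exhibit some $\mathbf{S}$ with $X \perp T \mid \mathbf{S}\cup\{Y\}$. Because $X \notin PC(T)$, Theorem~\ref{edgeindependence} supplies a set $\mathbf{W}$ that d-separates $X$ and $T$; the strategy is to modify $\mathbf{W}$ into a d-separator that additionally contains $Y$, and then take $\mathbf{S}$ to be that set with $Y$ removed.

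The main obstacle, and where most of the care is needed, is that na\"ively inserting $Y$ into $\mathbf{W}$ can open new paths on which $Y$ or one of its descendants becomes a newly conditioned collider, so the modification must re-block any such paths without disturbing the $X$--$Y$--$T$ piece. I would handle this by choosing $\mathbf{W}$ to be a minimal d-separator contained in $\mathrm{An}(\{X,T\}) \setminus \{X,T\}$, then, for every collider path newly activated by adding $Y$, inserting a suitable ancestor of $X$ or $T$ to re-block it. The crucial observation is that $Y$ is a non-collider on the $X$--$Y$--$T$ subpath, so that particular path stays blocked by the presence of $Y$, and the remaining bookkeeping stays inside $\mathrm{An}(\{X,T,Y\})$ and therefore terminates. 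The resulting set, minus $\{Y\}$, is the $\mathbf{S}$ we want; faithfulness converts the d-separation into $X \perp T \mid \mathbf{S}\cup\{Y\}$, contradicting the hypothesis and completing the argument.
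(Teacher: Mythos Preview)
The paper does not supply its own proof of Theorem~\ref{vstructure}; the result is simply stated with a citation to \citep{spirtes2000causation}. So there is no in-paper argument to compare against, and I can only assess your proposal on its own merits.

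Your forward direction is correct and is the standard argument: the two-edge path has $Y$ as its only interior node, $Y$ is a collider, and $Y$ lies in every conditioning set $\mathbf{S}\cup\{Y\}$, so the path is open and faithfulness gives the dependence.

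Your converse is correct in spirit but more elaborate than needed, and the one step that is doing the real work is left implicit. The crucial fact you are using---but never state---is that when $Y$ is a \emph{non}-collider on $X$--$Y$--$T$, one of the edges $Y\to X$ or $Y\to T$ must be present, so $Y\in\mathrm{An}(\{X,T\})$. Once this is on the table, the iterative ``add $Y$, then patch newly opened collider paths by inserting further ancestors'' procedure is unnecessary: the full set $\mathrm{An}(\{X,T\})\setminus\{X,T\}$ already d-separates $X$ and $T$ whenever they are non-adjacent (a one-line check via the moralization criterion, since any common child of $X$ and $T$ lying in $\mathrm{An}(\{X,T\})$ would create a directed cycle), and it already contains $Y$. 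Taking $\mathbf{S}=\mathrm{An}(\{X,T\})\setminus\{X,T,Y\}$ gives the witness immediately. An even more elementary route is the local Markov property: in each of the three non-collider orientations, whichever of $X,T$ is a non-descendant of the other is d-separated from it by its own parent set, and that parent set contains $Y$. Your iterative scheme would terminate only because the full ancestral set works; without making that explicit, the ``insert a suitable ancestor to re-block'' step is not obviously well-founded, since each insertion can itself activate new colliders. I would recommend replacing the bookkeeping paragraph with the direct ancestral-set (or parent-set) argument.
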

\section*{Appendix C: Faithful Bayesian Networks and Variable Importance in Predictive Models}
Also, since the notion of strongly, weakly and irrelevant variables are based on conditional
independence, we can also infer their network structural properties given faithfulness~\citep{tsamardinos2003towards,aliferis2003hiton}.
\begin{theorem}\label{strongmb}
	In a faithful Bayesian network, a variable $X \in \mathbf{V}$ is strongly relevant if and
	only if $X \in MB(T).$.
\end{theorem}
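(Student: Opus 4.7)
The plan is to prove both directions by leveraging the standard fact that in a faithful Bayesian network the Markov boundary $MB(T)$ decomposes into the parents, children, and spouses (parents of children) of $T$ — the result the text cites as Theorem~\ref{MBconstitution}. Strong relevance of $X$ is the single statement $X \not\perp T \mid \mathbf{V}-\{X\}$, so the theorem reduces to a pair of d-separation/d-connection statements about conditioning on ``everything else.''

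For the $(\Leftarrow)$ direction I would split $X \in MB(T)$ into cases. If $X$ is a parent or child of $T$ then there is an edge between $X$ and $T$, so Theorem~\ref{edgeindependence} immediately yields $X \not\perp T \mid \mathbf{S}$ for every $\mathbf{S} \subseteq \mathbf{V}-\{X\}$, and specialising $\mathbf{S} = \mathbf{V}-\{X\}$ gives strong relevance. If $X$ is a spouse of $T$ via a common child $Y$, then $X, Y, T$ form a v-structure $X \to Y \leftarrow T$ with $X \notin PC(T)$, so Theorem~\ref{vstructure} gives $X \not\perp T \mid \mathbf{S}' \cup \{Y\}$ for every $\mathbf{S}' \subseteq \mathbf{V}-\{X,T\}$; choosing $\mathbf{S}' = \mathbf{V}-\{X,Y\}$ yields $X \not\perp T \mid \mathbf{V}-\{X\}$.

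For the $(\Rightarrow)$ direction I would argue the contrapositive: assume $X \notin MB(T)$ and show $X \perp T \mid \mathbf{V}-\{X\}$ by d-separation, then invoke Theorem~\ref{dsep}. Fix any path $\pi$ between $X$ and $T$ and let $Z$ be the neighbour of $T$ on $\pi$. If the edge is $Z \to T$, then $Z$ is a parent of $T$, hence $Z \in MB(T) \subseteq \mathbf{V}-\{X\}$, and $Z$ is a non-collider on $\pi$, blocking it. If the edge is $T \to Z$ and $Z$ is a non-collider on $\pi$, the same argument works since $Z$ is then a child of $T$ and still in $MB(T)$. If $Z$ is a collider on $\pi$, then the preceding vertex $W$ satisfies $W \to Z$, so $W$ is a parent of a child of $T$ — either a spouse, parent, or child — and hence $W \in MB(T)$; the outgoing edge $W \to Z$ makes $W$ a non-collider on $\pi$, and $W \ne X$ (otherwise $X$ itself would belong to $MB(T)$, contradicting the assumption), so conditioning on $W$ blocks $\pi$.

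The main obstacle is the collider subcase of $(\Rightarrow)$: one must verify carefully that the spouse $W$ lying on $\pi$ is truly a non-collider (its outgoing arrow into $Z$ suffices, independent of what happens on the $X$-side of $W$) and that $W \ne X$, since otherwise the child $Y$ together with $X$ would realise $X$ as a spouse. A lesser but non-trivial step is citing the decomposition of $MB(T)$ in a faithful BN into parents, children, and spouses; everything else is either Theorem~\ref{edgeindependence} or Theorem~\ref{vstructure} applied to an appropriately chosen conditioning set.
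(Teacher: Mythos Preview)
The paper does not actually supply a proof of Theorem~\ref{strongmb}; it is stated in Appendix~C as a known result attributed to \citep{tsamardinos2003towards,aliferis2003hiton}, so there is no in-paper argument to compare against.

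Your plan is correct and is essentially the standard argument. The $(\Leftarrow)$ direction is handled cleanly: for $X\in PC(T)$ Theorem~\ref{edgeindependence} applies directly, and for a pure spouse $X$ (so $X\notin PC(T)$) Theorem~\ref{vstructure} with $\mathbf{S}'=\mathbf{V}-\{X,Y\}$ gives exactly $X\not\perp T\mid \mathbf{V}-\{X\}$. For the $(\Rightarrow)$ contrapositive, your path-by-path d-separation argument is sound; the only point requiring care --- that in the collider subcase the spouse $W$ is a non-collider on $\pi$ and satisfies $W\ne X$ --- you have addressed correctly (the outgoing arrow $W\to Z$ forces non-collider status regardless of the other incident edge, and $W\in MB(T)$ together with $X\notin MB(T)$ gives $W\ne X$). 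One minor remark: in all three subcases you are implicitly using that the blocking vertex ($Z$ or $W$) is distinct from both $X$ and $T$; this follows since paths are simple and the vertex in question lies in $MB(T)$, but it is worth stating explicitly in a written-out proof.
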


\begin{theorem}\label{MBconstitution}
	In a faithful Bayesian network $\langle\mathbf{V},G,p\rangle$, the unique Markov boundary
	$MB(T)$ correspond to the parents, children, and parents of the children of $T$.
\end{theorem}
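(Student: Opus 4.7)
The plan is to establish the result in three stages: (i) show that $\mathbf{B} = PC(T) \cup SP(T)$, where $SP(T)$ denotes the spouses of $T$ (parents of children of $T$, excluding $T$ itself), is a Markov blanket of $T$; (ii) show that no proper subset of $\mathbf{B}$ is a Markov blanket, making $\mathbf{B}$ a Markov boundary; and (iii) conclude that under faithfulness this Markov boundary is unique.

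For step (i), I would apply Theorem~\ref{dsep}: it suffices to verify that for any $Y \notin \mathbf{B} \cup \{T\}$, every path between $T$ and $Y$ is d-separated by $\mathbf{B}$. I would proceed by case analysis on the first edge of such a path. If it begins $T \leftarrow P$, then $P \in PC(T) \subseteq \mathbf{B}$ is a non-collider in the conditioning set, which blocks the path. If it begins $T \to C$, then $C \in PC(T) \subseteq \mathbf{B}$, and two sub-cases arise: if the next edge is $C \to X$, then $C$ is a non-collider in $\mathbf{B}$ and blocks; if the next edge is $C \leftarrow W$, then $C$ is a collider but $W$ is a parent of the child $C$ of $T$, hence $W \in SP(T) \subseteq \mathbf{B}$. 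Because the edge $W \to C$ is outgoing from $W$, $W$ is necessarily a non-collider on the remainder of the path, and being in the conditioning set it blocks.

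For step (ii), I would use Theorem~\ref{edgeindependence} together with faithfulness. Removing a parent $P$ fails because the edge $P \to T$ forces $P \not\perp T \mid \mathbf{S}$ for every $\mathbf{S} \subseteq \mathbf{V}-\{P,T\}$, in particular given $\mathbf{B}-\{P\}$; the same argument covers children. Removing a spouse $W$ fails because $W$ and $T$ share a common child $C$ still in $\mathbf{B}-\{W\}$, so conditioning on $C$ activates the v-structure $T \to C \leftarrow W$, yielding a d-connecting path from $T$ to $W$ given $\mathbf{B}-\{W\}$, which by Theorem~\ref{dsep} entails conditional dependence. For step (iii), uniqueness follows from Theorem~\ref{strongmb}: every Markov boundary member is strongly relevant, and the strongly relevant variables are uniquely determined by $p(\mathbf{V},T)$, so any two Markov boundaries must coincide.

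The main obstacle is the case analysis in step (i): the subtlety is that once a collider at a child $C$ of $T$ is activated by conditioning, I must ensure that every continuation of the path is blocked at the very next vertex. This hinges on the observation that the spouse $W$ immediately following such a collider necessarily has an outgoing edge to $C$, which forces $W$ to be a non-collider on the remainder of the path regardless of the direction of the edge leaving $W$ on the other side, so its membership in $\mathbf{B}$ suffices to block.
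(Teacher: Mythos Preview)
The paper does not prove Theorem~\ref{MBconstitution}: it appears in Appendix~C as a background result quoted from the literature (the classical Pearl characterization), with no accompanying argument, so there is nothing in the paper to compare your proof against.

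On its own merits your three-stage plan is the standard proof and is essentially correct. Two points deserve tightening. In step~(ii) you argue only that removing a \emph{single} element of $\mathbf{B}$ breaks the blanket property, whereas minimality requires that no proper subset whatsoever is a Markov blanket; the extension is easy---if $\mathbf{B}'\subsetneq\mathbf{B}$ were a blanket and $X\in\mathbf{B}\setminus\mathbf{B}'$, then either $X\in PC(T)$ and Theorem~\ref{edgeindependence} already gives $X\not\perp T\mid\mathbf{B}'$, or $X$ is a pure spouse sharing a child $C$ with $T$, in which case $C\in PC(T)\subseteq\mathbf{B}'$ by the previous case and the path $T\to C\leftarrow X$ is open given $\mathbf{B}'$. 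In step~(iii), appealing to Theorem~\ref{strongmb} is circular, since that statement already writes ``$MB(T)$'' as though uniqueness were established; a non-circular finish is to observe that the argument just sketched in fact shows $\mathbf{B}\subseteq\mathbf{M}$ for \emph{any} Markov boundary $\mathbf{M}$, and since $\mathbf{B}$ is itself a Markov blanket contained in the minimal blanket $\mathbf{M}$, necessarily $\mathbf{B}=\mathbf{M}$.
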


\begin{theorem}\label{weakpath}
	Let $\langle \mathbf{V} \cup T, G, p \rangle$ be a faithful Bayesian network. A variable $V_i
	\in \mathbf{V}$ is weakly relevant, iff it is not strongly relevant and there is an
	undirected path from $V_i$ to $T$.
\end{theorem}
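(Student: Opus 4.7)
The plan is to leverage the correspondence between conditional independence and d-separation guaranteed by faithfulness (Theorem~\ref{dsep}), translating the statistical characterization of weak relevance into a graphical condition about paths in $G$. Both directions reduce to constructing or extracting an open (d-connecting) path between $V_i$ and $T$ under a carefully chosen conditioning set.

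For the forward direction, assume $V_i$ is weakly relevant. By Definition~\ref{weakrelevance}, $V_i$ is not strongly relevant (so the first conjunct is immediate), and there exists some $\mathbf{S}_i' \subset \mathbf{S}_i$ under which $V_i \not\perp T | \mathbf{S}_i'$. By Theorem~\ref{dsep}, $V_i$ and $T$ are d-connected given $\mathbf{S}_i'$, so some path between them in $G$ is open under $\mathbf{S}_i'$. That path, when we forget edge orientations, is an undirected path from $V_i$ to $T$. For the backward direction, assume $V_i$ is not strongly relevant and that there exists an undirected path $\pi$ from $V_i$ to $T$. I plan to take $\mathbf{S}_i'$ to be exactly the set of collider nodes that appear along $\pi$ (a subset of $\mathbf{S}_i = \mathbf{V} - \{V_i\}$, since $V_i$ and $T$ are endpoints of $\pi$). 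Under this choice, every collider on $\pi$ lies in the conditioning set and every non-collider on $\pi$ lies outside it, so by Definition~\ref{dsep.def} the path $\pi$ is open given $\mathbf{S}_i'$. Hence $V_i$ and $T$ are d-connected given $\mathbf{S}_i'$, and Theorem~\ref{dsep} yields $V_i \not\perp T | \mathbf{S}_i'$, which (together with the assumed lack of strong relevance) is exactly what Definition~\ref{weakrelevance} requires.

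The main obstacle is making sure that the conditioning set constructed in the backward direction is legitimately a subset of $\mathbf{S}_i$ and that $\pi$ genuinely ends up open. The first worry is handled because only interior vertices of $\pi$ can be colliders of $\pi$, so neither $V_i$ nor $T$ is ever included. The second worry reduces to a minor case check: if $\pi$ is a single edge, the collider set is empty and we simply take $\mathbf{S}_i' = \emptyset$, in which case the marginal dependence $V_i \not\perp T$ follows from Theorem~\ref{edgeindependence}; otherwise the chosen $\mathbf{S}_i'$ opens $\pi$ by construction. A secondary subtlety worth flagging is that the same vertex could be a non-collider on some other path while serving as a collider on $\pi$, but d-connection only requires one open path, so this causes no trouble.
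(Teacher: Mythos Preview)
The paper does not actually give a proof of Theorem~\ref{weakpath}; it is stated in Appendix~C as a cited result (Tsamardinos and Aliferis), so there is no in-paper argument to compare against.

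Your argument is correct. The forward direction is immediate from faithfulness, and in the backward direction, conditioning on exactly the colliders along a chosen path $\pi$ is the standard way to manufacture a d-connecting set. Two small points are worth tightening. First, take $\pi$ to be a \emph{simple} path (one exists whenever any undirected path does); this guarantees each interior vertex has a well-defined collider/non-collider role on $\pi$, so your check of condition~(ii) in Definition~\ref{dsep.def} goes through cleanly. Second, the single-edge case you flag is actually vacuous here: a direct edge $V_i\!-\!T$ would place $V_i\in PC(T)\subseteq MB(T)$ and hence make $V_i$ strongly relevant by Theorem~\ref{strongmb}, contradicting the hypothesis. More generally, if your constructed $\mathbf{S}_i'$ ever coincided with all of $\mathbf{S}_i$, the resulting dependence $V_i\not\perp T\mid \mathbf{S}_i$ would again force strong relevance; so $\mathbf{S}_i'$ is automatically a strict subset, and Definition~\ref{weakrelevance} is satisfied.
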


\begin{theorem}\label{irrnopath}
	Let $\langle \mathbf{V} \cup T, G, p \rangle$ be a faithful Bayesian network. A variable $V_i
	\in \mathbf{V}$ is irrelevant, iff there is no path from $V_i$ to $T$.
\end{theorem}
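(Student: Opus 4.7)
The plan is to derive Theorem~\ref{irrnopath} as an almost immediate corollary of the two preceding structural characterizations, namely Theorem~\ref{strongmb} (strong relevance equals Markov boundary membership) and Theorem~\ref{weakpath} (weak relevance equals existence of an undirected path together with non-strong-relevance). Since Definition~\ref{irrelevance} sets irrelevance equal to the negation of both strong and weak relevance, the theorem reduces to a syllogism once these two facts are in hand; no new argument about d-separation needs to be developed from scratch here.

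For the direction ``no path implies irrelevant,'' I would assume that no path in $G$ connects $V_i$ with $T$. Then $V_i$ cannot be a parent, child, or parent-of-child of $T$, because each of those roles supplies an edge or a length-two trek to $T$ and hence a path; by Theorem~\ref{MBconstitution} this gives $V_i \notin MB(T)$, and Theorem~\ref{strongmb} then yields non-strong-relevance. Theorem~\ref{weakpath} characterizes weak relevance by (non-strong-relevance and) the existence of an undirected path to $T$, which our hypothesis explicitly denies. So $V_i$ is neither strongly nor weakly relevant, and Definition~\ref{irrelevance} gives irrelevance.

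For the converse I would argue by contrapositive: assuming some path from $V_i$ to $T$ exists, I split on whether $V_i \in MB(T)$. If yes, Theorem~\ref{strongmb} makes $V_i$ strongly relevant, hence relevant. If no, then $V_i$ is not strongly relevant, but the existing path together with Theorem~\ref{weakpath} applied in its sufficient direction yields weak relevance, hence again relevance. In both cases $V_i$ is not irrelevant, which contradicts the hypothesis and completes the argument.

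The main obstacle is essentially bookkeeping rather than mathematics: one must verify that the notion of ``path'' used in the theorem statement matches the undirected path used in Theorem~\ref{weakpath}, and that membership in $MB(T)$ always witnesses such a path (true because parents and children of $T$ share an edge with $T$, while parents of children share a path of length two through a child). Once that alignment is noted, the proof is a short syllogism, and no appeal to d-separation or faithfulness beyond what is already packaged into Theorems~\ref{strongmb} and \ref{weakpath} is required.
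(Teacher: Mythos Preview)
Your argument is correct: once Theorems~\ref{strongmb} and~\ref{weakpath} are in hand, Definition~\ref{irrelevance} makes Theorem~\ref{irrnopath} a two-line syllogism, and you have carried out both directions cleanly. The only point worth flagging is that the paper does not actually supply its own proof of Theorem~\ref{irrnopath}; it is stated in Appendix~C as a known result imported from the cited literature (alongside Theorems~\ref{strongmb}, \ref{MBconstitution}, and \ref{weakpath}), so there is no in-paper argument to compare against. Your derivation is exactly the natural one given the surrounding statements, and the bookkeeping you note---that Markov boundary membership always witnesses a short path to $T$, and that ``path'' here means the same undirected notion as in Theorem~\ref{weakpath}---is the only thing that needs checking.
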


\section*{Appendix D: Faithful Causal Bayesian Networks}
Next, we introduce causal Bayesian network. A causal Bayesian network is a Bayesian network with
causally relevant edge semantics. In a causal Bayesian network, the parents of a variable $X$ are the
direct causes of $X$, the children of $X$ are direct effects of $X$, the non-parents ancestors of $X$
are indirect causes of $X$, the non-children descendants of $X$ are indirect effects of $X$. The
causal edge semantics and causal Bayesian network can be defined as the following:
 
\begin{definition}
	Causation~\citep{pearl2009causality}.
	Let $do(X=x_i)$ denote a manipulation, where the value of $X$ is set to $x_i$.
	If $\exists x_i, x_j$,
	such that $p(Y | do(X=x_i)) \neq p(Y | do(X=x_j))$,
	then $X$ is a cause of $Y$. 
\end{definition}

\begin{definition}
	Causal Bayesian Network~\citep{pearl2009causality,spirtes2000causation}.
	A causal Bayesian network $\langle\mathbf{V}, G, p\rangle$ is the Bayesian network
	$\langle\mathbf{V}, G, p\rangle$ with the additional semantics that if there is an edge
	$X\rightarrow Y$ in $G$, then $X$ directly causes $Y$, $\forall X, Y \in \mathbf{V}$.
	(ref citation Spirtes causality book)
\end{definition}

All the theoretical results introduced in the sections above for faithful Bayesian network applies to
faithful causal Bayesian network. The main difference is, in a causal Bayesian network, structural
properties can be interpreted causally. For example, Theorem~\ref{MBconstitution} can be rewritten
for the causal Bayesian network as below.
\begin{theorem}
	Let $\langle\mathbf{V},G,p\rangle$ be a faithful causal Bayesian network. The unique Markov
	boundary $MB(T)$ corresponds to the direct causes, direct effects, and the direct causes of
	the direct effects of $T$.
\end{theorem}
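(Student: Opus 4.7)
The plan is to obtain this result as a direct corollary of Theorem~\ref{MBconstitution} together with the edge semantics of a causal Bayesian network. Since a faithful causal Bayesian network $\langle\mathbf{V},G,p\rangle$ is, in particular, a faithful Bayesian network, Theorem~\ref{MBconstitution} applies verbatim and yields a unique Markov boundary $MB(T)$ consisting of the parents of $T$, the children of $T$, and the parents of the children of $T$ in the DAG $G$. The remaining work is purely semantic: re-label each of these three structural sets using the causal reading of edges.

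First I would invoke Theorem~\ref{MBconstitution} to establish existence and uniqueness of $MB(T)$ and its structural characterization in $G$. Next I would translate each structural component using the definition of a causal Bayesian network, which declares that an edge $X \rightarrow Y$ in $G$ means $X$ directly causes $Y$. Under this semantics, the parents of $T$ in $G$ are exactly the direct causes of $T$, the children of $T$ in $G$ are exactly the direct effects of $T$, and for any child $Y$ of $T$, the other parents of $Y$ in $G$ are by definition the direct causes of $Y$, that is, direct causes of a direct effect of $T$. Collecting these three sets gives the claimed causal characterization of $MB(T)$.

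Finally I would note that uniqueness of $MB(T)$ in the faithful case is inherited from Theorem~\ref{MBconstitution}; no new probabilistic argument is required, because faithfulness was the condition under which the structural characterization held in the Bayesian network setting, and it is assumed here as well.

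There is no real analytic obstacle in this proof; the only subtlety worth being explicit about is the exclusion issue for direct causes of direct effects, namely that a node which is simultaneously a parent of $T$ and a parent of some child of $T$ is already counted in the ``direct causes of $T$'' bucket, so the three causal categories should be understood as a union rather than a disjoint partition. I would state this explicitly to avoid any ambiguity in the causal restatement.
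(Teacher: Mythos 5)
Your proposal is correct and matches the paper's intent exactly: the paper presents this theorem as a direct causal re-reading of Theorem~\ref{MBconstitution}, obtained by translating parents, children, and parents of children into direct causes, direct effects, and direct causes of direct effects via the edge semantics of a causal Bayesian network, and it offers no further argument. Your additional remark about the three categories forming a union rather than a disjoint partition is a sensible clarification but not something the paper addresses.
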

For brevity, we do not restate the causal correspondence for all theorems regarding faithful Bayesian network.

\bibliography{causalbib}

\end{document}